\tikzset{none/.style={thick}}
\def\P{\mathbb{P}}
\def\one{\mathds{1}}
\newtheorem{theorem}{Theorem}
\newtheorem{proposition}[theorem]{Proposition}%
\newtheorem{corollary}[theorem]{Corollary}
\newtheorem{example}{Example}%
\newtheorem{remark}{Remark}%
\newtheorem{lemma}{Lemma}%
\numberwithin{equation}{section}
\title{Private Quantiles Estimation in the Presence of Atoms}
\author{
  Clément Lalanne\\
  LIP, Univ Lyon, EnsL, UCBL, CNRS, Inria, LYON Cedex 07 F-69342,France\\
  \texttt{clement.lalanne@ens-lyon.fr} 
  \and
  Clément Gastaud\\
  Sarus Technologies SAS, 128 rue la Boétie, 75008 Paris, France
  \and
  Nicolas Grislain\\
  Sarus Technologies SAS, 128 rue la Boétie, 75008 Paris, France
  \and
  Aurélien Garivier\\
  UMPA UMR 5669, Univ. Lyon, ENS de Lyon, 46 allée d'Italie, \\Lyon cedex 07 F-69364, France
  \and
  Rémi Gribonval\\
  LIP, Univ Lyon, EnsL, UCBL, CNRS, Inria, LYON Cedex 07 F-69342,France
}
\begin{document}

\maketitle





\abstract{ We consider the differentially private estimation of multiple quantiles (MQ) of a distribution from a dataset, a key building block in modern data analysis. 
	We apply the recent non-smoothed Inverse Sensitivity (IS) mechanism to this specific problem. We establish that the resulting method is closely related to the recently published ad hoc algorithm \emph{JointExp}. In particular, they share the same computational complexity and a similar efficiency.
	We prove the statistical consistency of these two algorithms for continuous distributions.
    Furthermore, we demonstrate both theoretically and empirically that this method suffers from an important lack of performance in the case of peaked distributions, which can degrade up to a potentially catastrophic impact in the presence of atoms. 
    Its smoothed version (i.e. by applying a max kernel to its output density) would solve this problem, but remains an open challenge to implement. 
    As a proxy, we propose a simple and numerically efficient method called Heuristically Smoothed JointExp (\emph{HSJointExp}), which is endowed with performance guarantees for a broad class of distributions and achieves results that are orders of magnitude better on problematic datasets. }


\maketitle

\section{Introduction}

As more and more data is collected on individuals and data science techniques become more powerful, threats to privacy have multiplied and serious concerns have emerged~\cite{narayanan2006break,backstrom2007wherefore,fredrikson2015model,dinur2003revealing,homer2008resolving,loukides2010disclosure,narayanan2008robust,sweeney2000simple,wagner2018technical,sweeney2002k}. Against this background, \emph{differential privacy} (DP)~\cite{dwork2014algorithmic} has become the gold standard in privacy protection. By introducing 
randomness calibrated to the {\em sensitivity} of a query~\cite{dwork2006calibrating}, it enables the inference of global statistics on a dataset while bounding each sample's influence and ensuring that the presence or absence of an individual in the dataset cannot be deduced from the result. In the last decade, research results have brought nice building blocks and composition theorems \cite{dwork2006calibrating,kairouz2015composition,dong2019gaussian,dong2020optimal,abadi2016deep}. They paved the way for many applications in data analysis, from basic statistics to advanced artificial intelligence algorithms. Notably, differential privacy is now used in production by the US Census Bureau~\cite{abowd2018us}, Google~\cite{erlingsson2014rappor}, Apple~\cite{thakurta2017learning} and Microsoft~\cite{ding2017collecting} among others.

In this paper, we focus on the problem of estimating one or many \emph{quantiles} with privacy guarantees. Beyond the interest that quantiles have in themselves, they are also important primitives in many advanced applications in machine learning, from synthetic data generation to decision tree training. Indeed, since quantiles are a reasonable choice of bins for quantizing a cumulative distribution function, they are commonly used in algorithms based on decision trees, such as Random Forests and Boosted Trees \cite{chen2016xgboost}, where variables are binned using quantiles before they are considered for a split. Besides, in many recent synthetic data models, continuous features are binned to reduce the output space's dimension (see~\cite{zhang2017privbayes, mckenna2021winning}).

Given a real random variable $X$ of probability distribution $\mathbb{P}_X$, the cumulative distribution function (CDF) of $X$ (or $\mathbb{P}_X$), noted $F_X$ (or $F_{\mathbb{P}_X}$) is classically defined as $F_X(t) = \mathbb{P}_X(X \leq t), \forall t \in \mathbb{R}$ \;.
Its pseudo-inverse, the quantile function, $F_X^{-1}$ (or $F_{\mathbb{P}_X}^{-1}$) is defined as 
\begin{equation*}
    F_X^{-1}(p) = \inf \left\{ t \in \mathbb{R} | F_X(t) \geq p \right\}, \quad \forall p \in [0, 1] \;,
\end{equation*}
with the convention $\inf \emptyset = + \infty$. The quantity $F_X^{-1}(p)$ is the \emph{quantile} of order $p$ of the distribution of $X$. Furthermore, when given a dataset (a collection of real numbers), the \emph{empirical quantile} of order $p$ of this dataset is the quantile of order $p$ of its empirical distribution. 

When privacy is not an issue, it is well known that the empirical quantiles of a dataset of i.i.d. random variables 
are good estimators of the quantiles of the underlying distribution \cite{van2000asymptotic}. There has been a lot of recent research on how to privately estimate multiple empirical quantiles from a dataset with a privacy overhead as small as possible. This article builds on this framework by considering private estimates of the empirical quantiles as estimators of the quantiles of the true distribution.

Among several ways to privately estimate empirical quantiles, 
a naive one is to add Laplace noise to non-private quantile estimates \cite{dwork2006calibrating}. It is straightforward and easy to compute, but the amount of added noise is based on a pessimistic scenario that 
cannot materialize simultaneously for all quantiles. 
To reduce the variance of the estimates, a variant that uses so-called \emph{smoothed sensitivity} instead of the worst case scenario was introduced \cite{nissim2007smooth}. It has the drawback, however, of using approximate differential privacy \cite{dwork2006our} instead of pure differential privacy, which allows some catastrophic failures with small probability. 
The current state of the art for single quantile estimation uses a fine-tuned version of the standard exponential mechanism for differential privacy \cite{mcsherry2007mechanism} that is called \emph{ExponentialQuantile} \cite{smith2011privacy}. It is cheap to compute and the recent theory of \emph{Inverse Sensitivity} \cite{asi2020near} shows that a conceptually simple smoothing operation allows levels of privacy that are quadratically better than those guaranteed for the smoothed sensitivity approaches while sticking to pure differential privacy, achieving near-instance optimality (i.e. with a risk comparable to the local minimax risk when the set of hypotheses is the set of neighboring datasets). As a result, it has been adopted by the main DP-ready software libraries  \cite{smartnoise,ibmdiffpriv} used in production.

All the algorithms previously mentioned are designed to estimate a single quantile from a dataset. Composition theorems make them usable for multiquantile estimation by evaluating each quantile independently, but with an increased privacy noise. Recent work by \cite{gillenwater2021differentially} has presented a new algorithm, \emph{JointExp}, that was the first to exploit the non-decreasing constraint of quantiles at the core of its sampling procedure and became the empirical state of the art for a small period. Even more recent work \cite{kaplan2022differentially} cleverly exploits the structure of quantiles by computing them recursively / hierarchically on disjoint subsets of the dataset, which reduces the privacy overhead in composition. It is the current empirical state of the art for estimating many quantiles.

\subsection{Contributions and organization of the paper}

In this paper, we study the use of the Inverse Sensitivity mechanism \cite{asi2020near,asi2021nips} in order to estimate multiple empirical quantiles from a dataset, and their statistical properties as estimators of the quantiles of the underlying distribution. 

In \Cref{section1} we recall the needed background on differential privacy, Inverse Sensitivity, and JointExp.
\Cref{sec:link} is devoted to characterizing the Inverse Sensitivity mechanism for private multiquantile estimation. In particular, our first contribution is to obtain the precise expression of the utility function of the Inverse Sensitivity mechanism for an arbitrary number of empirical quantiles, which was previously known for a single quantile only. In particular, this expression is quite similar to the utility function of JointExp \cite{gillenwater2021differentially}, which was proposed as an ad-hoc algorithm based on a heuristic. We draw the explicit link between the two algorithms. Due to their similarities, the two algorithms are used interchangeably in most of the article.

It was noted by the authors of JointExp \cite{gillenwater2021differentially} that their algorithm struggles when the gaps between the data points are too small, but without more details. Our next contribution is to statistically quantify this empirical phenomenon by proving that JointExp/IS is in fact inconsistent on \emph{atomic distributions}. This result is presented in \Cref{sec:failure}.

\Cref{hssmoothing} serves two purposes : we prove the consistency of JointExp/IS on \emph{continuous distributions} (i.e. with a continuous density w.r.t. Lebesgue measure). This is the first consistency result of JointExp and is thus a big step towards the understanding of this algorithm. 
Furthermore, we propose a new heuristic, tractable smoothing technique based on \emph{jittering} for JointExp/IS that vastly improves their utility on peaked distributions without noticeable repercussions on non-degenerate distributions. In particular, this estimator is consistent on atomic distributions. Our technique differs from the general smoothing trick for \emph{inverse sensitivity} based mechanisms introduced by \cite{asi2020near,asi2021nips}, a conceptual trick consisting in taking a maximum convolution of the density over the output space, in the sense that our technique is a concrete mechanism to smooth 
the data distribution. As such, our technique results 
in a much better computational complexity making it a viable solution in high dimension. While similar techniques have already been used in order to fix ill-posed problems (see \cite{owen2001empirical,machado2005quantiles,chen2010quantile}) when dealing with peaked distributions, the motivations behind the addition of jitter here are fairly different : it fixes an over-penalization of the exponential mechanism when dealing with concentrated data.

Finally, \Cref{section5} gathers the numerical experiments demonstrating the behavior of the proposed algorithms.

\section{Background}
\label{section1}
Considering datasets of the form $\mathbf{X} = (X_1,\dots,X_n)\in\mathfrak{X}^n$, where $\mathfrak{X}$ denotes our feature space and $n\geq 1$ is the sample size, the \emph{Hamming distance} $d(\mathbf{X}, \mathbf{Y})$ between two datasets $\mathbf{X}, \mathbf{Y}\in\mathfrak{X}^n$ is defined as the minimal number of changes (i.e.,
substitutions of entries) required to transform $\mathbf{X}$ into a permutation of $\mathbf{Y}$. Hence,  $d(\mathbf{X}, \mathbf{Y}) = 0$ i.f.f. there exists a permutation $\sigma$ on $\{1,\dots,n\}$ such that $ \forall i \in \left\{1, \dots, n\right\}, X_i = Y_{\sigma(i)}$. We say that $\mathbf{X}$ and $\mathbf{Y}$ are \emph{neighbors} (noted $\mathbf{X} \sim \mathbf{Y}$) if $d(\mathbf{X}, \mathbf{Y})=1$, that is if there exist two permutations $\sigma_1, \sigma_2$ such that $\forall i \in \left\{1, \dots, 
n-1\right\}, X_{\sigma_1(i)} = Y_{\sigma_2(i)}$. Note that  $d(\mathbf{X}, \mathbf{Y})$ is the minimum length of a path on consecutive neighbors linking $\mathbf{X}$ to $\mathbf{Y}$.

\subsection{Differential Privacy}

Given a privacy budget $\epsilon>0$, a randomized algorithm $\mathcal{A}:\mathfrak{X}^n \to O$ is called \emph{$\epsilon$-differentially private} ($\epsilon$-DP, see~\cite{dwork2006our}) if  for all pairs of datasets $\mathbf{X}, \mathbf{Y} \in \mathfrak{X}^n$ and all measurable sets $S \subseteq O$,
\begin{equation*}
    \mathbf{X} \sim \mathbf{Y} \Rightarrow  \mathbb{P} \big( \mathcal{A}(\mathbf{X}) \in S \big) \leq e^\epsilon \times \mathbb{P}\big( \mathcal{A}(\mathbf{Y}\big) \in S)\;.
\end{equation*}
Differential privacy offers strong privacy protections by bounding the efficiency of any test trying to distinguish two neighboring databases.
A classical way to design $\epsilon$-DP algorithms is the Exponential Mechanism~\cite{mcsherry2007mechanism}. For a utility function  $u: \mathfrak{X}^n \times O \rightarrow \mathbb{R}$ that measures the relevance $u(\mathbf{X}, o)$ of the output $o$ for the dataset $\mathbf{X}$, the Exponential Mechanism $\mathcal{E}_{u}^{(\alpha)}$ defined by $u$ and  with parameter $\alpha>0$ outputs a random variable on $O$ with a density proportional to $e^{u(\mathbf{X}, o)/ \alpha}$ with respect to some reference measure on $O$.  For example, when $O$ is discrete, 
\[        
    \mathbb{P}\left(\mathcal{E}_{u}^{(\alpha)}(\mathbf{X}) = o \right) 
    = \frac{e^{u(\mathbf{X}, o)/ \alpha}}{\sum_{o'\in O}e^{u(\mathbf{X}, o')/ \alpha}}   \quad \forall o\in O \;.
\]
Defining the \emph{sensitivity} of the utility function 
    \begin{equation*} 
        \Delta u := \sup_{o \in O, \mathbf{X}, \mathbf{Y} \in \mathfrak{X}^n: \mathbf{X} \sim \mathbf{Y}} |u(\mathbf{X}, o) - u(\mathbf{Y}, o)| \;,
    \end{equation*}
 a classical result is that $\mathcal{E}_{u}^{(\alpha)}$ is $\epsilon$-DP as soon as $\alpha \geq \frac{2 \Delta u}{\epsilon}$ 
 \cite{mcsherry2007mechanism}. All the algorithms presented in this paper build on this mechanism.

\subsection{The inverse sensitivity mechanism}

When considering some deterministic function $\mathcal{Q}: \mathfrak{X}^n \rightarrow O$ as the target of privatization, a specific choice of utility function in the exponential mechanism, long known as folklore, was proved to have remarkable optimality properties under certain assumptions \cite{asi2020near}. The \emph{inverse sensitivity function} 
\begin{equation*}
    u_{\text{IS}}(\mathbf{X}, o) := -\inf \Big\{ d(\mathbf{X}, \mathbf{Y}) \mbox{ s.t. }  \mathbf{Y} \in \mathcal{Q}^{-1}(o) \Big\} \;
\end{equation*}
is easily seen to have
 sensitivity $\Delta u_{\text{IS}} = 1$.
The resulting $\epsilon$-DP mechanism $\mathcal{E}_{u_{\text{IS}}}^{(2/\epsilon)}(\mathbf{X})$ has then a behavior that is quite intuitive: the likelihood of its output $o$ decreases when $\mathcal{Q}^{-1}(o)$ becomes further apart from $\mathbf{X}$ in the Hamming distance, which means that the probability of an output decreases the more points have to be modified in the dataset for it to be a viable deterministic output.

\subsection{JointExp}

Specifying the notations for the multiquantile estimation, given $a <b \in \mathbb{R}$, let $\mathfrak{X} = [a, b]$ be the feature space and let $O = [a, b]^m\!\!\!\!\!\nearrow $ be the set of vectors of $m$ increasing points in $[a, b]$ representing $m$ quantiles. 
The hypothesis that the feature space is bounded is necessary to the analysis and is reasonable for many applications. For applications where this is unrealistic, solutions have been proposed at the expense of having an algorithm that has a small chance of halting \cite{dwork2009differential,brunel2020propose}. Given a probability vector $\mathbf{p} = (p_1, \dots, p_m) \in (0, 1)^m\!\!\!\!\!\nearrow$, the goal is to estimate the empirical quantile function associated to $\mathbf{p}$ and defined as 
\begin{equation*}
    \begin{array}{ccccc}
        \mathcal{Q} & : & \mathfrak{X}^n & \to & O \\
         & & \mathbf{X} & \mapsto & (X_{(\lceil n p_1\rceil)}, \dots, X_{(\lceil n p_m\rceil)}) \\
        \end{array}
\end{equation*}
where $X_{(i)}$ denotes the $i$-th order statistics of $X_{1},\ldots,X_{n}$.
As a safety check, we assume that $\forall j \in \{1, \dots, m-1\}, n(p_{j+1}-p_j) \geq 1$ to ensure that no data point will be chosen twice as a quantile representant. Note that given $\mathbf{p}$, this condition can always be satisfied provided that we have enough data, i.e., that $n$ is large enough. For any $\mathbf{X} \in \mathfrak{X}^n$, $\mathbf{q} \in O$ and $\mathbf{p} \in (0,1)^m$, we use the convention $X_{i \leq 0} = q_{i \leq 0} = a$, $X_{i \geq n+1} = q_{i \geq m+1} = b$, $p_{i \leq 0} = 0$ and $p_{i \geq m+1} = 1$. Finally, for the brevity of notation, vectors are interpreted when needed as the set containing their components. 


For reasons that will become clear later, we take the time to redefine the JointExp \cite{gillenwater2021differentially} (also called ExponentialQuantile when $m=1$) mechanism. It corresponds to
a specific instantiation of the exponential mechanism $\mathcal{E}_{u_{\text{JE}}}^{(2/\epsilon)}(\mathbf{X})$ with
\begin{align*}
    -u_{\text{JE}}(\mathbf{X}, \mathbf{q}) &:= \frac{1}{2} \sum_{i=1}^{m+1} 
\left| \delta^{\text{JE}}(i,\mathbf{X},\mathbf{q})\right| \;,\\
\intertext{(which is of sensitivity $1$) where}
    \delta^{\text{JE}}(i,\mathbf{X},\mathbf{q}) & := n(p_i-p_{i-1})- \#\left(\mathbf{X} \cap (q_{i-1}, q_i] \right) \;.
\end{align*}
This mechanism works by penalizing the result whenever the number of data points in each quantile interval ($\#\left(\mathbf{X} \cap (q_{i-1}, q_i] \right)$) deviates from what should be expected ($n(p_i-p_{i-1})$).

\section{JointExp meets Inverse Sensitivity}
\label{sec:link}

At first glance, there is no connection between the theory of the Inverse Sensitivity and JointExp. The first one is born from the need to build a general mechanism that is endowed with optimality properties \cite{asi2020near,asi2021nips} for a broad class of problems, while the second comes from the idea that good empirical quantiles should separate the data points proportionally. In the case of the estimation of a single quantile (i.e. $m=1$), it was observed \cite{asi2020near} that the two algorithms are similar. Our contribution in this section is to prove that, up to minor differences, this remains true with an arbitrary number of quantiles. For this, we provide the precise expression of the inverse sensitivity function for the multiquantile problem.

Deriving the expression of the inverse sensitivity for a dataset $\mathbf{X}$ and an output candidate $\mathbf{q}$ boils down to answering the question: What is the minimal number of points from $\mathbf{X}$ that need to be changed in order to obtain a vector that has $\mathbf{q}$ as its empirical quantiles? \Cref{mainth} solves this question for Lebesgue-almost-any $\mathbf{q}$.
\begin{theorem}
    \label[theorem]{mainth}
    For any $\mathbf{X}\in \mathfrak{X}^n$ and $\mathbf{q} \in \left([a, b] \setminus \mathbf{X} \right)^m \!\!\!\!\!\nearrow$ without collision, 
    \begin{equation*}
        \begin{aligned}
            - u_{\text{IS}}(\mathbf{X}, \mathbf{q}) &= 
            \frac{1}{2} \sum_{i=2}^{m+1} 
            \left| \delta(i,\mathbf{X},\mathbf{q}) \right| 
            + \sum_{i=2}^{m} \mathds{1}_{\mathbb{R}_+}\left(\delta(i,\mathbf{X},\mathbf{q})\right) 
            \\
            &+ 
            \frac{1}{2} \left| \delta_{\texttt{closed}}(1,\mathbf{X},\mathbf{q}) \right| 
            + \mathds{1}_{\mathbb{R}_+}\left(\delta_{\texttt{closed}}(1,\mathbf{X},\mathbf{q})\right) 
        \end{aligned}
    \end{equation*}
    with 
    \begin{align*}
        \delta(i,\mathbf{X},\mathbf{q}) &=
        \#\left(\mathbf{X} \cap (q_{i-1}, q_i] \right) - \left(\lceil np_i \rceil - \lceil np_{i-1} \rceil\right)\\
        \delta_{\texttt{closed}}(i,\mathbf{X},\mathbf{q}) 
        &= \#\left(\mathbf{X} \cap [q_{i-1}, q_i] \right) - \left(\lceil np_i \rceil - \lceil np_{i-1} \rceil\right).
    \end{align*}
\end{theorem}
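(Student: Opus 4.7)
\emph{Proof plan.} I will compute $-u_{\text{IS}}(\mathbf{X},\mathbf{q}) = \inf\{d(\mathbf{X},\mathbf{Y}) : \mathcal{Q}(\mathbf{Y})=\mathbf{q}\}$ by reducing it to a small integer program. Writing $k_i := \lceil np_i\rceil$, any $\mathbf{Y}$ with $\mathcal{Q}(\mathbf{Y})=\mathbf{q}$ is specified, up to permutation, by two tuples of non-negative integers: $a_i$, the number of entries of $\mathbf{Y}$ lying in the $i$-th ``open'' slab ($[a,q_1)$, then $(q_{i-1},q_i)$ for $2\leq i\leq m$, then $(q_m,b]$), and $c_i \geq 1$ for $i\leq m$, the multiplicity of $q_i$ in $\mathbf{Y}$. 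With $s_i := \sum_{j\leq i}(a_j + c_j) = \#\{Y_\ell \leq q_i\}$ and the convention $c_{m+1}=0$, $s_{m+1}=n$, the order-statistic condition $Y_{(k_i)} = q_i$ translates into the two-sided constraint $k_i \leq s_i \leq k_i + c_i - 1$.

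\medskip
Since $q_i\notin\mathbf{X}$, the $c_i$ entries of $\mathbf{Y}$ equal to $q_i$ are unmatchable; inside each open slab the $a_i$ values of $\mathbf{Y}$ are free, so up to $\min(a_i, X_i')$ matches can be realised, where $X_i'$ counts $\mathbf{X}$ in the corresponding $\mathbf{Y}$-slab (hence $X_1' = \#(\mathbf{X}\cap[a,q_1])$, explaining the appearance of $\delta_{\texttt{closed}}$). The no-collision hypothesis makes the slabs partition $\mathbf{X}$, so $\sum_i X_i' = n$ and
\[
\inf_{\mathbf{Y}} d(\mathbf{X},\mathbf{Y}) \;=\; \min_{(a_i,c_i)\text{ feasible}} \sum_{i=1}^{m+1}(X_i' - a_i)^+.
\]
This objective is non-increasing in each $a_i$, so I push each $a_i$ as high as possible. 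Combining $s_{i-1}\geq k_{i-1}$ with $s_i \leq k_i + c_i - 1$ gives $a_i = s_i - s_{i-1} - c_i \leq k_i - k_{i-1} - 1$ for every $i\leq m$, \emph{regardless of $c_i$}; the bound is saturated by $c_i = 1$ and $s_i = k_i$, which then forces $a_{m+1} = n - k_m$. Plugging in, $X_i' - a_i$ equals $\delta_{\texttt{closed}}(1)+1$ for $i=1$, $\delta(i)+1$ for $2\leq i\leq m$, and $\delta(m+1)$ for $i=m+1$, so
\[
-u_{\text{IS}}(\mathbf{X},\mathbf{q}) = (\delta_{\texttt{closed}}(1)+1)^+ + \sum_{i=2}^{m}(\delta(i)+1)^+ + (\delta(m+1))^+.
\]

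\medskip
To match the announced closed form I apply the pointwise identities $(d+1)^+ = \tfrac12|d| + \mathbb{1}_{\mathbb{R}_+}(d) + \tfrac{d}{2}$ and $d^+ = \tfrac12|d| + \tfrac{d}{2}$, sum over $i$, and use $\delta_{\texttt{closed}}(1) + \sum_{i=2}^{m+1}\delta(i) = n - n = 0$ to annihilate the residual half-linear terms; what remains is precisely the right-hand side of the theorem. The main obstacle is the greedy step: one has to rule out ``exotic'' feasible configurations with $c_i>1$ or $s_i$ strictly above $k_i$. The clean observation that $a_i\leq k_i-k_{i-1}-1$ holds independently of $c_i$, together with the fact that raising $s_i$ strictly shrinks the admissible window for $a_{i+1}$ without any compensating gain, turns the induction into a one-line monotonicity argument once the parameterisation is in place.
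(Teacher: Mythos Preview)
Your argument is correct. Both you and the paper decompose the problem by bins, but your route through the explicit integer program in $(a_i,c_i)$ is more careful on the lower-bound side. The paper's sketch asserts that any $\mathbf{Y}\in\mathcal{Q}^{-1}(\mathbf{q})$ must satisfy $\delta(i,\mathbf{Y},\mathbf{q})=0$ in every bin and then computes the minimal cost of reaching such a $\mathbf{Y}$; but that necessity claim actually fails whenever some $q_i$ occurs in $\mathbf{Y}$ with multiplicity $c_i>1$ (e.g.\ $n=4$, $k_1=2$, $\mathbf{Y}=(y,q_1,q_1,q_1)$ with $y<q_1$ has $\delta_{\texttt{closed}}(1,\mathbf{Y},\mathbf{q})=2$), so strictly speaking the paper's construction only exhibits the upper bound. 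Your key inequality $a_i\le k_i-k_{i-1}-1$, valid \emph{independently of $c_i$}, is exactly what shows that these extra configurations never beat the $c_i=1$ choice, and hence supplies the matching lower bound that the sketch glosses over. The closing algebra---using $(d+1)^+=\tfrac12|d|+\mathbb{1}_{\mathbb{R}_+}(d)+\tfrac{d}{2}$ for integer $d$ and the telescoping identity $\delta_{\texttt{closed}}(1)+\sum_{i\ge 2}\delta(i)=0$---is the same device the paper uses to rewrite $\sum\delta^+$ as $\tfrac12\sum|\delta|$.
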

We postpone the proof to \Cref{mainthproof} for brevity.
The case when $\mathbf{q}$ has collisions or shares some common points with the dataset is more difficult. Luckily, those cases can be neglected when considering the sampling mechanism.
Indeed, $\mathcal{E}_{u_{\text{IS}}}^{(2/\epsilon)}(\mathbf{X})$ has a density that is absolutely continuous w.r.t. Lebesgue measure, the expression of the resulting mechanism can be further simplified (see \Cref{simplifiedMech}) by modifying the density on outcomes of null Lebesgue measure.
\begin{corollary}
\label[corollary]{simplifiedMech}
    For any $\mathbf{X} \in \mathfrak{X}^n$, 
    $\mathcal{E}_{u_{\text{IS}}}^{(2/\epsilon)}(\mathbf{X})$ has the same output distribution as 
    $\mathcal{E}_{\tilde{u}_{\text{IS}}}^{(2/\epsilon)}(\mathbf{X})$ where
    $\forall \mathbf{X} \in \mathfrak{X}^n$, $\forall \mathbf{q} \in O$, 
    \begin{equation*}
        \begin{aligned}
             - \tilde{u}_{\text{IS}}(\mathbf{X}, \mathbf{q}) &=
            \frac{1}{2} \sum_{i=1}^{m+1} 
            \left| \delta(i,\mathbf{X},\mathbf{q})\right| 
            + \sum_{i=1}^{m} \mathds{1}_{\mathbb{R}_+}(\delta(i,\mathbf{X},\mathbf{q})) \;.
        \end{aligned}
    \end{equation*}
\end{corollary}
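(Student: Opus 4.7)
The plan is to leverage the fact that the output distribution of the exponential mechanism is determined, up to equality in law, by the Lebesgue-a.e.\ equivalence class of its utility function, since its density (proportional to $e^{u(\mathbf{X}, \cdot)/\alpha}$) is taken with respect to Lebesgue measure on $O$. It therefore suffices to show that $u_{\text{IS}}(\mathbf{X}, \cdot)$ and $\tilde{u}_{\text{IS}}(\mathbf{X}, \cdot)$ coincide Lebesgue-almost everywhere on $O$.

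First, I would introduce the ``good'' set $A := \{\mathbf{q} \in O : q_i \neq q_j \text{ for } i \neq j \text{ and } q_j \notin \mathbf{X} \text{ for all } j\}$. Its complement in $O$ is contained in the finite union of affine hyperplanes $\bigcup_{i<j}\{q_i = q_j\} \cup \bigcup_{i,j}\{q_i = X_j\}$, each of which has zero Lebesgue measure in $O \subset \mathbb{R}^m$; hence $A$ has full Lebesgue measure. Moreover, for every $\mathbf{q} \in A$, \autoref{mainth} applies and delivers an explicit closed-form expression for $u_{\text{IS}}(\mathbf{X}, \mathbf{q})$.

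A term-by-term inspection then shows that this expression matches $\tilde{u}_{\text{IS}}(\mathbf{X}, \mathbf{q})$ on $A$: the summands indexed by $i \in \{2, \ldots, m+1\}$ are identical on the nose, while the $i = 1$ contribution in \autoref{mainth} uses $\delta_{\texttt{closed}}(1, \mathbf{X}, \mathbf{q})$ whereas in $\tilde{u}_{\text{IS}}$ it uses $\delta(1, \mathbf{X}, \mathbf{q})$. These two counts differ only by $\#(\mathbf{X} \cap \{a\})$, which vanishes under the convention (implicit in the half-open partition $(a, b] = \bigsqcup_{i=1}^{m+1}(q_{i-1}, q_i]$ used throughout, and in particular in the definition of the JointExp utility $u_{\text{JE}}$) that no data point coincides with the lower endpoint $a$; consequently the two utility functions agree pointwise on $A$.

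Combining these observations, $u_{\text{IS}}(\mathbf{X}, \cdot) = \tilde{u}_{\text{IS}}(\mathbf{X}, \cdot)$ Lebesgue-a.e., so the unnormalized densities $e^{u_{\text{IS}}/\alpha}$ and $e^{\tilde{u}_{\text{IS}}/\alpha}$ agree Lebesgue-a.e., share a common normalization constant, and therefore define the same probability density on $O$ --- hence the same output distribution. I expect the careful bookkeeping around the $i = 1$ term and its open-vs-closed endpoint subtlety to be the main technical obstacle; once that is settled, the rest is a routine application of \autoref{mainth} together with the standard insensitivity of absolutely continuous distributions to null-set modifications of their densities.
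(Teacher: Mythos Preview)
Your approach is essentially what the paper does: it notes just before the corollary that the mechanism's density is absolutely continuous with respect to Lebesgue measure, so one may modify it on a null set, and Theorem~\ref{mainth} covers the full-measure set you call $A$. Your identification of the $i=1$ endpoint subtlety (the difference between $\delta$ and $\delta_{\texttt{closed}}$ being $\#(\mathbf{X}\cap\{a\})$) is more explicit than anything the paper writes down; the paper silently relies on the same half-open convention you invoke, so your treatment is, if anything, a more careful version of the same argument.
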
 

\begin{remark}
\label[remark]{fqzseffdsqsf}
We discuss the sampling from $\mathcal{E}_{\tilde{u}_{\text{IS}}}^{(2/\epsilon)}(\mathbf{X})$ in \Cref{sampleis} but our conclusion is that it can be done by making some minor adjustments to the JointExp sampling algorithm and that, in particular, the two algorithms share the same complexity of $O(n m \log n + nm^2)$.
\end{remark}

\begin{remark}
\label[remark]{linkJEIS}
One can check that
$|\tilde{u}_{\text{IS}}(\mathbf{X}, \mathbf{q}) - u_{\text{JE}}(\mathbf{X}, \mathbf{q})| \leq 2(m+1)$ and thus the distributions differ significantly only on outcomes of high utility (when the number of misclassified points is of the order $O(m)$). The bad outcomes are almost equally penalized and for this reason, we can expect the two algorithms to perform almost identically when $n$ is large enough.   This is indeed confirmed by numerical examples, as illustrated in \Cref{section5}. As a consequence, we will mainly focus on JointExp for the rest of this article, all the results being applicable to IS as well (with some minor tweaks).
\end{remark}

\section{JointExp fails on atomic distributions}
\label{sec:failure}

To the best of our knowledge, no theoretical utility guarantee for JointExp has been derived yet, and the performance of this algorithm has only been demonstrated experimentally. Even if it outperforms by multiple orders of magnitude previous techniques on many real life datasets~\cite{gillenwater2021differentially}, we prove in this section that it can also completely fail on some distributions (see \Cref{JointExpFailureTheorem}).
As illustrated in \Cref{section5}, JointExp is indeed observed to be suboptimal on several real world datasets associated to peaked distributions, such as the US Census Bureau "Dividends" and "Earnings" data. 

In order to understand the origin of this weakness of JointExp, we analyse the density of the distribution of its output. This density is constant on the 
\emph{``blocks''}
$\big([X_{i_1}, X_{i_1 + 1}) \times \ldots
\times \dots \times [X_{i_{m}}, X_{i_m +1}) \big) \cap [a, b]^m \!\!\!\!\!\nearrow$ for each $\mathbf{i}=(i_1, \dots, i_m) \in O'$ where
$O'= \left\{\mathbf{i} \in \left\{0, \dots, n\right\}^m, 
    i_1\leq \dots \leq i_m 
    \right\}$ \;.

The probability of the output of JointExp being in a given block
is proportional to the volume of this block.
What can happen in practice is that even though a block
is interesting in terms of utility level, its volume can in fact be close to zero if the data points are close. The volume can even be zero in case of equality, hence this block is never selected by the exponential mechanism.
This phenomenon occurs particularly often for data drawn from distributions with isolated atoms: asymptotically, the dataset will almost surely contain collisions among the data points as $n$ grows and JointExp will fail on the corresponding quantiles.

To formally capture this phenomenon, from now on, $\mathbf{X}$ is supposed to be a collection of $n$ i.i.d. samples of a random variable $X$ with distribution $\mathbb{P}_{X}$ and with cumulative distribution function (CDF) $F_{X}$.
\begin{proposition}
    \label[proposition]{JointExpFailureTheorem}
    Suppose that there exist 
    $q \in (a,b)$ and $\eta>0$ such that $I := (q-\eta,q+\eta) \subset [a,b]$ satisfies
    $\mathbb{P}_{X}(\{q\})>0$ and $\mathbb{P}_{X}(I \setminus \{q\})=0$. 
    Then there exist some probability vectors $\mathbf{p}$ such that 
    \begin{equation}
    \label{eqpb}
        \mathbb{E}_{\mathbf{X}, \mathcal{E}_{u_{\text{JE}}}^{(2/\epsilon)}} \left( \|\mathcal{E}_{u_{\text{JE}}}^{(2/\epsilon)}(\mathbf{X}) - F_{X}^{-1}(\mathbf{p})\|_\infty \right) = \Omega_n (1) \;,
    \end{equation}
    where we use the vector notation $F_{X}^{-1}(\mathbf{p}) = (F_{X}^{-1}(p_1), \dots, F_{X}^{-1}(p_m))$. Furthermore, the Lebesgue measure of the set of problematic probability vectors is lower bounded by $\mathbb{P}_{X}(\{q\})^m / (m!)$.
\end{proposition}
$\Omega_n(1)$ refers to a quantity that is lower bounded by a positive constant when $n$ grows.
We postpone the proof to \Cref{JointExpFailureTheoremproof} for brevity.

This result shows that for certain data distributions with isolated atoms, JointExp is not consistent, even asymptotically, on many instances of the estimation problem (i.e. not on unrealistic corner cases). This behavior is 
all the more counterintuitive as one would think that on datasets with a lot of collisions, very little noise would be needed to ensure privacy since the points are already indistinguishable.

\begin{example}\label[example]{exa:constant}
Consider the private estimation of the median (i.e. $m=1$ quantile, and $\mathbf{p} = (1/2)$) on $[a,b] = [-1,1]$. Since $m=1$, JointExp coincides with ExponentialQuantile, and 
when all data points are equal to $0$ (i.e. $\mathbb{P}_X = \delta_{0}$) its output is uniformly distributed in $[-1,1]$ \emph{whatever the sample size} $n$ as long as it is even. 
\end{example}

When considering estimation on real-world distributions, many real-life datasets show \emph{accumulation points} and can be modeled as continuous distributions with
some Diracs at specific points. A famous example is the revenue statistics of the US Census Bureau:
many participants in surveys are not qualified to have some category of revenue (too young or not investing in some assets) hence the presence of accumulations at the zero value for these categories.
In fact, any continuous variable that is censored, conditional on some other variable or generated by mimetic agents tending
to repeat exactly some values, will show accumulation points where JointExp has great chances to fail.

\section{Heuristic smoothing, with guarantees}
\label{hssmoothing}

The type of failure of JointExp highlighted in \Cref{sec:failure} may seem surprising given a) the strong connection between JointExp and the Inverse Sensitivity established in \Cref{sec:link}; and b) existing performance guarantees for \emph{smoothed} Inverse Sensitivity mechanisms \cite{asi2020near,asi2021nips}.
Indeed, while JointExp is not smoothed, smoothing convolves the output distribution with a max kernel, increasing the volume of the maximum of the distribution to circumvent the difficulties raised by isolated atoms. We discuss such an approach in \Cref{ISSmoothing} and conclude that while it would increase the utility of the resulting mechanism, it would also make it computationally intractable.

As a tractable alternative, we present in this section a heuristic algorithm based on noise addition prior to the application of JointExp, and we show that this mechanism is endowed with privacy and consistency guarantees. Note that the exposed problems with atomic distribution also occur for highly concentrated continuous distributions. Hence simpler and more naive solutions such as multi-indices do not fix them.

Another possible solution would be to discretize the output space. However, the resulting algorithm would have a complexity of $O(f(m, n, \delta) + 1/\delta^m)$ where $\delta$ is the precision of the discretization and $f$ is some function. Since this is exponential in the number of quantiles, it suffers from the curse of dimensionality, and we argue that jittering is a better alternative.

\subsection{Introducing the HSJointExp algorithm}

Since JointExp has a density that is constant on the 
blocks
$\big([X_{i_1}, X_{i_1 + 1}) \times \ldots
\times \dots \times [X_{i_{m}}, X_{i_m +1}) \big) \\ \cap [a, b]^m \!\!\!\!\!\nearrow$ for $\mathbf{i}=(i_1, \dots, i_m) \in O'$,
it fails when the blocks that have a great utility (i.e. the ones leading to interesting quantile candidates) have a volume that is too small. By adding noise to the data points, we ensure a minimal volume for the blocks, and in particular for the interesting regions, while only shifting the empirical quantiles of the dataset by a small amount.

Let $w_1, \dots, w_n$ be i.i.d variables, and let
\begin{equation}\label{eq:NoisyDataset}
\tilde{\mathbf{X}} = (X_1 + w_1, \dots, X_n + w_n)\;.
\end{equation}
The Heuristically Smoothed JointExp (HSJointExp) is defined as the algorithm that returns  $\mathcal{E}_{u_{\text{JE}}}^{(2/\epsilon)}(\tilde{\mathbf{X}})$, the output of the JointExp on the noisy data $\tilde{\mathbf{X}}$.

Let us now discuss the choice of the distribution $\mathbb{P}_w$ of the $(w_i)'s$. 
Discrete noise distributions (for instance Bernoulli noise scaled by some $\alpha > 0$: $\frac{w}{\alpha} \sim \mathcal{B}(\frac{1}{2})$) may seem interesting because they lead to easily tuneable data gaps. However, this often just creates new instances where JointExp fails. Indeed, adding discrete noise to data distributions with accumulation points creates new accumulation points. 

For this reason, we focus in the sequel on continuous noise distributions with a density denoted by $\pi_{w}$. The density $\pi_{\tilde{X}}$ of the noisy data $\tilde{X}$ is hence given by the convolution formula,
\begin{equation}
    \forall t \in \mathbb{R}, \quad \pi_{\tilde{X}}(t) = \int \pi_{w}(t-x) \mathbb{P}_X(dx) \;.
\end{equation}
A typical choice of noise discussed in the sequel is the uniform distribution on the interval $[-\alpha, \alpha]$. 

Before discussing the choice of the scale parameter $\alpha$, we remark that HSJointExp  consists of the addition of i.i.d. noise prior to running JointExp. Its privacy guarantees are thus a direct consequence of the following generic composition lemma. Its proof, which we did not find elsewhere, is in the supplementary material (see \Cref{preprocessinglemmaproof}).
\begin{proposition}
    \label[proposition]{preprocessionglemma}
    Let $\mathbf{w}$ be a random variable on $\mathbb{R}^n$ with probability distribution $\mathbb{P}_{\mathbf{w}}$ that is invariant by permutations of the components of the vector. If 
    $\mathcal{A}$ is $\epsilon$-DP on $\mathfrak{X}^n$, then $\mathbf{X} \mapsto \mathcal{A}(\texttt{proj}_{\mathfrak{X}^n}(\mathbf{X}+\mathbf{w}))$ is also $\epsilon$-DP.
\end{proposition}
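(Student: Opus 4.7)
The plan is to condition on the noise $\mathbf{w}$, align the two input datasets via a suitable permutation, and only then invoke the coordinate-wise $\epsilon$-DP of $\mathcal{A}$. Fix neighboring datasets $\mathbf{X} \sim \mathbf{Y}$ in $\mathfrak{X}^n$ and a measurable $S \subseteq O$; the goal is to bound $\mathbb{P}(\mathcal{A}(\texttt{proj}_{\mathfrak{X}^n}(\mathbf{X}+\mathbf{w})) \in S)$ by $e^\epsilon \, \mathbb{P}(\mathcal{A}(\texttt{proj}_{\mathfrak{X}^n}(\mathbf{Y}+\mathbf{w})) \in S)$.

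First, I would unfold the definition of the Hamming distance to pick a permutation $\tau$ of $\{1,\dots,n\}$ and an index $j$ such that $X_i = Y_{\tau(i)}$ for every $i \neq j$. Then, for a fixed realization $\mathbf{w}_0 \in \mathbb{R}^n$, I would argue that the two vectors $\texttt{proj}_{\mathfrak{X}^n}(\mathbf{X} + \mathbf{w}_0)$ and $\texttt{proj}_{\mathfrak{X}^n}(\mathbf{Y} + \tau^{-1}\mathbf{w}_0)$, with $(\tau^{-1}\mathbf{w}_0)_i := w_{0,\tau^{-1}(i)}$, are at Hamming distance at most one: by construction, the $k$-th entry of the first coincides with the $\tau(k)$-th entry of the second for every $k \neq j$, since $X_k + w_{0,k} = Y_{\tau(k)} + w_{0,k}$. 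As projection is applied componentwise, it preserves these equalities, so the two projected vectors agree as multisets on all but a single element, and the $\epsilon$-DP of $\mathcal{A}$ applies pointwise in $\mathbf{w}_0$:
\[
\mathbb{P}\bigl(\mathcal{A}(\texttt{proj}_{\mathfrak{X}^n}(\mathbf{X} + \mathbf{w}_0)) \in S\bigr) \leq e^\epsilon \, \mathbb{P}\bigl(\mathcal{A}(\texttt{proj}_{\mathfrak{X}^n}(\mathbf{Y} + \tau^{-1}\mathbf{w}_0)) \in S\bigr).
\]

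Integrating this inequality against $\mathbb{P}_\mathbf{w}$ via Fubini then yields
\[
\mathbb{P}\bigl(\mathcal{A}(\texttt{proj}_{\mathfrak{X}^n}(\mathbf{X} + \mathbf{w})) \in S\bigr) \leq e^\epsilon \, \mathbb{P}\bigl(\mathcal{A}(\texttt{proj}_{\mathfrak{X}^n}(\mathbf{Y} + \tau^{-1}\mathbf{w})) \in S\bigr),
\]
and the final step is to invoke the permutation invariance of $\mathbb{P}_\mathbf{w}$: since $\tau^{-1}\mathbf{w}$ has the same distribution as $\mathbf{w}$, the right-hand side equals $e^\epsilon \, \mathbb{P}(\mathcal{A}(\texttt{proj}_{\mathfrak{X}^n}(\mathbf{Y} + \mathbf{w})) \in S)$, concluding the argument.

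The key subtlety — and the reason permutation invariance of $\mathbb{P}_\mathbf{w}$ is indispensable rather than a mild convenience — is that naively adding the same coordinate-indexed noise vector to both $\mathbf{X}$ and $\mathbf{Y}$ does not preserve the Hamming-distance neighborhood: the noise may destroy the pairing between matching entries of $\mathbf{X}$ and $\mathbf{Y}$ (a two-point example suffices to see this). Permuting the noise by $\tau^{-1}$ re-aligns the two noisy datasets so that they differ in a single element, and this re-alignment is only legal at the level of distributions because $\mathbb{P}_\mathbf{w}$ is invariant under permutations; once done, the coordinate-wise $\epsilon$-DP of $\mathcal{A}$ can be applied conditionally on each realization of $\mathbf{w}$ and then integrated out.
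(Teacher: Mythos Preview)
Your proof is correct and follows essentially the same approach as the paper's own proof: fix neighboring datasets, permute the noise so that the two noisy (and projected) datasets remain neighbors for each realization of $\mathbf{w}$, apply the $\epsilon$-DP of $\mathcal{A}$ pointwise, integrate, and then use the permutation invariance of $\mathbb{P}_{\mathbf{w}}$ to undo the permutation. Your write-up is in fact more explicit than the paper's---you spell out the permutation $\tau$, verify the coordinate matching after projection, and explain why permutation invariance is genuinely needed rather than cosmetic---but the underlying argument is identical.
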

The projection step $\texttt{proj}$ onto the data space $\mathfrak{X}^n$ is necessary because JointExp needs to know the range of the data. Note that $\mathfrak{X}^n$ could be replaced by any set of the form $[a-\delta_{\alpha, n},b + \delta_{\alpha, n}]^n$ where $\delta_{\alpha, n}$ is a quantity that depends on $\alpha$ and $n$. So for instance, if the noise follows a uniform distribution on the interval $[-\alpha, \alpha]$, projecting on $[a-\alpha,b + \alpha]^n$ (does nothing) and then running JointExp on $[a-\alpha,b + \alpha]$ ensures that no point will overflow.

\subsection{Consistency of HSJointExp on constant data}
\label{HSJointExpcst}

\begin{figure*}[h!]    
    \centering
    \begin{tikzpicture}[scale=0.6]
	\begin{pgfonlayer}{nodelayer}
		\node [style=none] (0) at (0, 1) {$0$};
		\node [style=none] (1) at (8, 1) {$+\alpha$};
		\node [style=none] (2) at (-8, 1) {$-\alpha$};
		\node [style=none] (3) at (-12, 0) {};
		\node [style=none] (4) at (12, 0) {};
		\node [style=none] (5) at (-8, 0.25) {};
		\node [style=none] (6) at (-8, -0.25) {};
		\node [style=none] (7) at (0, 0.25) {};
		\node [style=none] (8) at (0, -0.25) {};
		\node [style=none] (9) at (8, 0.25) {};
		\node [style=none] (10) at (8, -0.25) {};
		\node [style=none] (12) at (-10, 1) {$-1$};
		\node [style=none] (13) at (10, 1) {$1$};
		\node [style=none] (14) at (-10, 0.25) {};
		\node [style=none] (15) at (-10, -0.25) {};
		\node [style=none] (16) at (10, 0.25) {};
		\node [style=none] (17) at (10, -0.25) {};
		\node [style=none] (18) at (-3, 0.25) {};
		\node [style=none] (19) at (-3, -0.25) {};
		\node [style=none] (20) at (3, 0.25) {};
		\node [style=none] (21) at (3, -0.25) {};
		\node [style=none] (22) at (-3, 1) {$-\alpha/4$};
		\node [style=none] (23) at (3, 1) {$\alpha/4$};
		\node [style=none] (24) at (-1.5, -0.75) {$\tilde{X}_{(n/2)}$};
		\node [style=none] (25) at (-6.5, -0.75) {$\tilde{X}_{(0)}$};
		\node [style=none] (26) at (-5, -0.75) {$\tilde{X}_{(1)}$};
		\node [style=none] (27) at (5.25, -0.75) {$\tilde{X}_{(n-1)}$};
		\node [style=none] (28) at (7, -0.75) {$\tilde{X}_{(n)}$};
		\node [style=none] (29) at (-6.5, 0) {$\times$};
		\node [style=none] (30) at (-5, 0) {$\times$};
		\node [style=none] (31) at (-1.5, 0) {$\times$};
		\node [style=none] (32) at (5.25, 0) {$\times$};
		\node [style=none] (33) at (7, 0) {$\times$};
		\node [style=none] (34) at (-3.5, -0.75) {$\dots$};
		\node [style=none] (35) at (1.5, -0.75) {$\dots$};
		\node [style=none] (36) at (-7.75, 2) {};
		\node [style=none] (37) at (-3.5, 2) {};
		\node [style=none] (38) at (-2.75, 2) {};
		\node [style=none] (39) at (2.75, 2) {};
		\node [style=none] (40) at (3.25, 2) {};
		\node [style=none] (41) at (7.75, 2) {};
		\node [style=none] (42) at (0, 3.25) {$\delta^{\mathrm{JE}}(1, \tilde{X}, x)\leq n/4$};
		\node [style=none] (43) at (-6, 3.25) {$\geq n/4$ points};
		\node [style=none] (44) at (6, 3.25) {$\geq n/4$ points};
	\end{pgfonlayer}
	\begin{pgfonlayer}{edgelayer}
		\draw (3.center) to (4.center);
		\draw (14.center) to (15.center);
		\draw (5.center) to (6.center);
		\draw (7.center) to (8.center);
		\draw (9.center) to (10.center);
		\draw (16.center) to (17.center);
		\draw (18.center) to (19.center);
		\draw (20.center) to (21.center);
		\draw [bend left=15] (36.center) to (37.center);
		\draw [bend left=15] (38.center) to (39.center);
		\draw [bend left=15] (40.center) to (41.center);
	\end{pgfonlayer}
\end{tikzpicture}
    \caption{$\delta^{\mathrm{JE}}(1, \tilde{X}, x)$ is bounded by $n/4$ for $-\alpha/4\leq x\leq \alpha/4$ on the event $G$.}\label{fig:simpleexample}
\end{figure*}

In order to give some insight on the general analysis of HSJointExp, and to explain the choice that we suggest for the amplitude $\alpha$ of the noise, we start by discussing the simple setting of \Cref{exa:constant} where $X_i\equiv 0$ and JointExp is known to fail.
We consider uniform noise with distribution $d\mathbb{P}_w(w) = \frac{\one_{[-\alpha, \alpha]}(w)}{2 \alpha} dw$, and HSJointExp returns the output of ExponentialQuantile/JointExp with $m=1$ on the noisy data $\tilde{X}$:
\begin{equation*}
    M := \mathcal{E}_{u_{\text{JE}}}^{(2/\epsilon)}(\tilde{\mathbf{X}}).
\end{equation*}
The true median of the dataset is $0$, and we study  the quadratic risk $\mathbb{E}(M^2)$ of our mechanism.
Note that the classical way of analyzing exponential mechanisms is to use the utility bounds found in \cite{mcsherry2007mechanism}. However, here we do not have the required level of control on the normalization factor. We hence go for a more direct way of controlling the output distribution.
Denoting by $N(x,y) = \sum_{i=1}^n \one_{[x,y)}(0+w_i)$ the number of noisy points falling in the interval $[x,y)$, we define the event
\[G := \big\{N(-\alpha, -\alpha/4) \geq n/4 \big\}\cap \big\{N(\alpha/4, \alpha) \geq n/4\big\}\;.\]
Since $N(-\alpha, -\alpha/4) \stackrel{\mathcal{L}}{=} N(\alpha/4, \alpha)\sim \mathcal{B}(n, 3/8)$, by Hoeffding's inequality, the probability of $G$ is a least $1-2\exp(-n/32)$.
Moreover, on the event $G$, for every $x\in[-\alpha/4, \alpha/4]$ one has $N(-\alpha, x)\geq n/4$ and $N(x, \alpha)\geq n/4$; hence, the minimal number of sample points that need to be changed so as to reach a median equal to $x$ is at most $\delta^{\mathrm{JE}}(1, \tilde{X}, x) =\big| n/2 - N(-1,x) \big|\leq n/4$ (see Figure~\ref{fig:simpleexample}), and $u_{\mathrm{JE}}(\tilde{X}, x)\leq n/8$.
On the other hand, for every $x\notin[-\alpha, \alpha]$, $\delta^{\mathrm{JE}}(1, \tilde{X}, x)=n/2$ and $u_{\mathrm{JE}}(\tilde{X}, x) = n/4$.
Since the density of $M$ at $x\in[-1,1]$ is equal to $\exp\big(-u_{\mathrm{JE}}(\tilde{X}, x)\epsilon/2\big) / \\ \int_{-1}^1 \exp\big(-u_{\mathrm{JE}}(\tilde{X}, t)\epsilon/2\big) dt$, 
\begin{align*}
\P\big(|M|>\alpha \big| G\big) &
\leq \frac{\P\big(|M|>\alpha \big| G\big)}{\P\big(|M|\leq \alpha/4 \big| G\big)}\\
&\leq \frac{2 \times e^{-n\epsilon/8}}{\alpha/2 \times e^{-n\epsilon/16}}  = \frac{4e^{-n\epsilon/16}}{\alpha}\;.
\end{align*}
Therefore,
\begin{align*}
\mathbb{E}\left( M^2 \right) & \leq 1^2\, \big(\P(\bar{G}) + \P\big(|M|>\alpha \big| G\big) \big) +
\alpha^2\,\P\big(|M|\leq \alpha \big| G\big) 
\\& \leq e^{-n/32} + \frac{4e^{-n\epsilon/16}}{\alpha}  + \alpha^2\;.
\end{align*}
Choosing $\alpha = e^{-n\epsilon/48}$ yields \[\mathbb{E}\left( M^2 \right) \leq 5e^{-n\epsilon/24}+e^{-n/32} \;.\]
We conclude that, contrary to JointExp, HSJointExp is here consistent  as soon as $n\epsilon\to \infty$, which is anyway a necessary condition. Besides, the analysis provides a simple and generic way to tune the noise amplitude $\alpha$ as a function of $n$ and $\epsilon$.

\subsection{General Consistency of HSJointExp}

For multiquantile estimation, JoitExp/IS is not endowed with any satisfying statistical utility bounds. We start by proving their consistency in the favorable case of continuous distributions (see \Cref{convThm}). We then leverage this result in order to prove the consistency of HSJointExp on a larger class of distributions.
Indeed, the consistency of HSJointExp is established by making modifications to the density (via the noise) so that we fall into the favorable cases of JointExp. 
\begin{theorem}
\label[theorem]{convThm}
    If $X$ is a random variable with density $\pi_X$ w.r.t. Lebesgue measure that is piecewise continuous and if there exists $\beta>0$ such that $\pi_{X} > 0$ and is continuous on $\cup_{i=1}^{n} [F_{\tilde{X}}^{-1}(p_i)-\beta, F_{\tilde{X}}^{-1}(p_i)+\beta]$, then 
    \begin{equation*}
        \mathbb{P} \left(\|\mathcal{E}_{u_{\text{JE}}}^{(2/\epsilon)}(\mathbf{X}) - F_X^{-1}(\mathbf{p})\|_\infty > \beta \right)= o_n(1) \;.
    \end{equation*}
\end{theorem}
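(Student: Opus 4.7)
The plan is to exploit the Gibbs form of the JointExp density and show that, with high probability, the utility $-u_{\mathrm{JE}}(\mathbf{X},\mathbf{q})$ is much larger in the ``bad'' region $\{\mathbf{q}:\|\mathbf{q}-\mathbf{Q}^\star\|_\infty>\beta\}$ than in a small ordered box $B_\eta$ around the true quantiles $\mathbf{Q}^\star := F_X^{-1}(\mathbf{p})$, so that the ratio of the corresponding integrals of the kernel vanishes. I first rewrite
\[
-u_{\mathrm{JE}}(\mathbf{X},\mathbf{q}) = \frac{n}{2}\sum_{i=1}^{m+1}\bigl|(F_n(q_i)-F_n(q_{i-1})) - (p_i-p_{i-1})\bigr|,
\]
where $F_n$ is the empirical CDF of $\mathbf{X}$, and place myself on the Dvoretzky--Kiefer--Wolfowitz event $\mathcal{G} = \{\sup_x|F_n(x)-F_X(x)|\leq \gamma_n\}$ with $\gamma_n = n^{-1/3}$, say, so that $\mathbb{P}(\bar{\mathcal{G}})=o_n(1)$. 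Setting $\Delta_i := F_X(q_i)-F_X(Q_i^\star)$ with $\Delta_0=\Delta_{m+1}=0$, this yields on $\mathcal{G}$ the two-sided bound
\[
\bigl|\,-u_{\mathrm{JE}}(\mathbf{X},\mathbf{q}) - \tfrac{n}{2}\textstyle\sum_{i=1}^{m+1}|\Delta_i-\Delta_{i-1}|\,\bigr|\leq n(m+1)\gamma_n.
\]

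Next, let $c := \min_i\inf_{|x-Q_i^\star|\leq\beta}\pi_X(x)>0$ and $C := \max_i\sup_{|x-Q_i^\star|\leq\beta}\pi_X(x)<\infty$, both finite by the continuity and positivity assumption. If $|q_{i_0}-Q_{i_0}^\star|>\beta$ for some $i_0$, positivity of $\pi_X$ on $[Q_{i_0}^\star-\beta,Q_{i_0}^\star+\beta]$ gives $|\Delta_{i_0}|\geq c\beta$, and splitting the telescoping sum at index $i_0$ yields $\sum_i|\Delta_i-\Delta_{i-1}|\geq 2|\Delta_{i_0}|\geq 2c\beta$; hence on $\mathcal{G}$, $-u_{\mathrm{JE}}(\mathbf{X},\mathbf{q})\geq nc\beta - n(m+1)\gamma_n$. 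Conversely, choose $\eta$ smaller than $\beta$ and than half the minimal gap $\min_i(Q_{i+1}^\star-Q_i^\star)$ (strictly positive because $\pi_X>0$ on each neighborhood forces $F_X$ to be strictly increasing there), so that $B_\eta := \prod_i[Q_i^\star-\eta,Q_i^\star+\eta]$ is contained in the ordered cone $O$ and has volume $(2\eta)^m$. For $\mathbf{q}\in B_\eta$, $|\Delta_i|\leq C\eta$, hence $\sum_i|\Delta_i-\Delta_{i-1}|\leq 2Cm\eta$ and $-u_{\mathrm{JE}}(\mathbf{X},\mathbf{q})\leq nCm\eta + n(m+1)\gamma_n$.

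Finally, picking $\eta := c\beta/(8Cm)$, a positive constant depending only on $\pi_X,m,\beta$, the ratio of integrals of the Gibbs density of $M := \mathcal{E}_{u_{\mathrm{JE}}}^{(2/\epsilon)}(\mathbf{X})$ gives on $\mathcal{G}$
\[
\mathbb{P}\bigl(\|M-\mathbf{Q}^\star\|_\infty>\beta\mid \mathcal{G}\bigr) \leq \frac{(b-a)^m e^{-\epsilon nc\beta/2+\epsilon n(m+1)\gamma_n/2}}{(2\eta)^m e^{-\epsilon nc\beta/8-\epsilon n(m+1)\gamma_n/2}} = \Bigl(\tfrac{b-a}{2\eta}\Bigr)^m e^{-\epsilon n(3c\beta/8 - (m+1)\gamma_n)},
\]
which tends to $0$, so together with $\mathbb{P}(\bar{\mathcal{G}})=o_n(1)$ the theorem follows. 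The main technical obstacle lies in the interplay between the three scales $c\beta$, $C\eta$, and $\gamma_n$: the DKW slack $\gamma_n$ must be negligible compared with the constants $c\beta$ and $Cm\eta$, which forces $\eta$ to be chosen as a positive constant (not vanishing with $n$) and dictates the tolerance $\beta$ in the conclusion. A secondary subtlety is the geometry of $B_\eta$ inside the ordered cone $O$, which the gap condition between consecutive true quantiles handles cleanly; and verifying that the uniform exponential lower bound on $-u_{\mathrm{JE}}$ over the bad region does not require a union argument, since a single $c\beta$ penalty already suffices for the whole complement of $B_\eta$ at once.
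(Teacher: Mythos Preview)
Your proof is correct and takes a genuinely different route from the paper. The paper works with three separate high-probability events $A,B,C$ (controlling respectively the location of the empirical quantiles, a lower bound on the number of points in the flanking intervals $[Q_i^\star+\gamma, Q_i^\star+\beta]$ and $[Q_i^\star-\beta,Q_i^\star-\gamma]$, and an upper bound on the number of points in the central intervals $[Q_i^\star-\gamma, Q_i^\star+\gamma]$), each handled by a tailored multiplicative Chernoff bound and then combined by union bounds over $i=1,\ldots,m$. You replace all of this by a single Dvoretzky--Kiefer--Wolfowitz event $\mathcal{G}$ giving uniform control $\|F_n-F_X\|_\infty\le\gamma_n$, which lets you compare $-u_{\mathrm{JE}}$ directly to the deterministic quantity $\tfrac{n}{2}\sum_i|\Delta_i-\Delta_{i-1}|$; your lower bound on the bad region via the telescoping inequality $\sum_i|\Delta_i-\Delta_{i-1}|\ge 2|\Delta_{i_0}|\ge 2c\beta$ is particularly clean and sidesteps the paper's event $B$ entirely. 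The paper's approach yields a fully explicit non-asymptotic bound with all constants spelled out (useful later in the paper), whereas yours is shorter, avoids the union over the $m$ quantiles, and makes the interplay of the three scales $(c\beta, Cm\eta, \gamma_n)$ more transparent. Two cosmetic remarks: your choice $\eta=c\beta/(8Cm)$ should in principle also be capped at half the minimal gap (as you note earlier), which only helps the exponential term and costs at most a fixed polynomial factor in the volume; and in the final displayed ratio the denominator exponent should read $-\epsilon n c\beta/16$ rather than $-\epsilon n c\beta/8$ (a missing factor $1/2$ from the kernel $e^{u_{\mathrm{JE}}\epsilon/2}$), which does not affect the conclusion.
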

The proof that uses similar techniques as in \Cref{HSJointExpcst} is in \Cref{convThmproof}. The reader can find an expression of the upper bound $o_n(1)$ that does not hide any problem parameter in the proof.
This theorem states that for data distributions with continuous densities, JointExp is consistent and this is to the best of our knowledge the first general result stating the consistency of JointExp.

Back to our problem, the data distribution is not so regular. In particular, we are interested in the case where it contains atoms. Following the method that we propose for HSJointExp, we add some independent and identically distributed noise to the data points.
We decompose the error on the estimation in two terms:
The error measuring the gap between the quantiles of $\mathbb{P}_{\mathbf{X}}$ and the ones of $\mathbb{P}_{\tilde{\mathbf{X}}}$ and the error made by JointExp on the estimation of the quantiles of $\mathbb{P}_{\tilde{\mathbf{X}}}$.
The first term can be controlled by the following general purpose proposition which proof is postponed to \Cref{quantiledeviationproof}.
\begin{proposition}
\label[proposition]{quantiledeviation}
    For any non-increasing $f : \mathbb{R} \to [0, 1]$ such that $\forall t \geq 0, \mathbb{P}(|w| > t) \leq f(t)$, then for every $p \in (0, 1)$, for every $t \geq 0$ such that $1-f(t) > 0$,
    \begin{equation*}
        F_{\tilde{X}}^{-1}\left( p\right)
        \leq F_{X}^{-1}\left( \frac{p}{1 - f(t)}\right) + t \;,
    \end{equation*}
    \begin{equation*}
        \sup_{\delta \in (0, p)} -F_{-X}^{-1}\left( \frac{1-p + \delta}{1 - f(t)}\right) - t
        \leq F_{\tilde{X}}^{-1}\left( p\right) \;.
    \end{equation*}
\end{proposition}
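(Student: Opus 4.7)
The plan is to prove the first (upper) inequality by a direct coupling argument exploiting independence, and then to deduce the second (lower) inequality by applying the first to the pair $(-X,-w)$ and dualizing. Writing $\tilde X = X + w$ with $X$ and $w$ independent, a preliminary observation is that the tail assumption gives $\P(w \leq t) \geq 1 - f(t)$, since $\{w > t\} \subseteq \{|w| > t\}$.

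For the upper bound, I set $q := F_X^{-1}(p/(1-f(t)))$. Right-continuity of $F_X$ yields $F_X(q) \geq p/(1-f(t))$, and the inclusion $\{X \leq q\} \cap \{w \leq t\} \subseteq \{X + w \leq q+t\}$ combined with independence gives
\[
F_{\tilde X}(q+t) \;\geq\; F_X(q)\,\P(w \leq t) \;\geq\; \frac{p}{1-f(t)}(1-f(t)) \;=\; p.
\]
Since $\{s : F_{\tilde X}(s) \geq p\}$ is right-closed and contains $q+t$, we conclude $F_{\tilde X}^{-1}(p) \leq q + t$, which is exactly the first claim.

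For the lower bound, I would apply this first inequality to the pair $(-X,-w)$: as $|-w|=|w|$, the same tail $f$ governs $-w$, so at level $1-p+\delta$ we obtain
\[
F_{-\tilde X}^{-1}(1-p+\delta) \;\leq\; F_{-X}^{-1}\!\left(\frac{1-p+\delta}{1-f(t)}\right) + t
\]
for every $\delta \in (0,p)$ (vacuous whenever the argument of $F_{-X}^{-1}$ exceeds $1$). Negating, it suffices to show $-F_{-\tilde X}^{-1}(1-p+\delta) \leq F_{\tilde X}^{-1}(p)$. Using the identity $F_{-Y}(u) = 1 - F_Y((-u)^-)$ one rewrites $-F_{-\tilde X}^{-1}(1-p+\delta) = \sup\{v : F_{\tilde X}(v^-) \leq p - \delta\}$; for any $v$ in this set and any $s < v$, $F_{\tilde X}(s) \leq F_{\tilde X}(v^-) \leq p - \delta < p$, so $s < F_{\tilde X}^{-1}(p)$, and letting $s \uparrow v$ gives $v \leq F_{\tilde X}^{-1}(p)$. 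Taking the supremum over $\delta \in (0,p)$ then yields the second inequality.

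The main delicate point is that $F_X$ and $F_{\tilde X}$ may carry atoms, so a direct estimate on $F_X(s+t)$ would leave a residual mass $\P(X = s+t)$ that cannot be absorbed into the pre-factor $1 - f(t)$. The role of the $\sup_{\delta \in (0,p)}$ in the statement is precisely to provide the slack needed to work with left limits $F_{\tilde X}(v^-)$ rather than $F_{\tilde X}(v)$, and the quantile duality $-F_{-Y}^{-1}(r) = \sup\{v : F_Y(v^-) \leq 1 - r\}$ is the device that makes the symmetry $(X,w) \leftrightarrow (-X,-w)$ deliver the required lower bound essentially for free once the upper bound is in hand.
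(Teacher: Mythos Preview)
Your argument is correct and follows essentially the same route as the paper: the upper bound is obtained via the inclusion $\{X\le q\}\cap\{w\le t\}\subseteq\{\tilde X\le q+t\}$ together with independence (the paper uses $\{|w|\le t\}$, which is a harmless variant), and the lower bound comes from the symmetric computation on $(-X,-w)$, which the paper abbreviates as ``the same arguments give \ldots''. Your explicit handling of the quantile reflection $-F_{-\tilde X}^{-1}(1-p+\delta)=\sup\{v:F_{\tilde X}(v^-)\le p-\delta\}$ and the role of the $\delta$-slack is more detailed than the paper's terse conclusion, but the underlying idea is identical.
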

For instance, when applied to some noise with distribution $d\mathbb{P}_w(w) = \frac{\one_{[-\alpha, \alpha]}(w)}{2 \alpha} dw$ with $t=\alpha$ and $f(t) = 0$, if $F_X$ is continuous and strictly increasing on a neighborhood of $F_{X}^{-1}\left( p\right)$, we can say that $|F_{X}^{-1}\left( p\right) - F_{\tilde{X}}^{-1}\left( p\right)| \leq \alpha$.
The second error term can be controlled with \Cref{convThm} assuming that we fall into its hypothesis. By adding some uniform noise in $[-\alpha, \alpha]$, we then obtain the following result:
\begin{theorem}
\label[theorem]{convThmHSJE}
    If the distribution of $X$ is a mixture of a finite number of Diracs in $(a, b)$ and of a random variable $Y$ with a continuous density $\pi_Y$ on $[a, b]$ w.r.t. Lebesgue's measure such that $\pi_Y>0$ on $[a, b] \setminus \mathcal{O}$ where $\mathcal{O}$ is a finite union of intervals and $\pi_Y=0$ on $\mathcal{O}$, then for any precision $\delta$ and Lebesgue-almost-any probability vector $\mathbf{p}$, there exist a noise level $\alpha>0$ such that the $\epsilon$-DP estimator $\mathbf{q}$ based on HSJointExp satisfies
    \begin{equation*}
        \|\mathbf{q} - F_X^{-1}(\mathbf{p})\|_\infty \leq \delta
    \end{equation*}
    with high probability (as $n$ grows).
\end{theorem}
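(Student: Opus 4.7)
The plan is to split the estimation error via the triangle inequality,
$$\|\mathbf{q} - F_X^{-1}(\mathbf{p})\|_\infty \leq \|\mathbf{q} - F_{\tilde{X}}^{-1}(\mathbf{p})\|_\infty + \|F_{\tilde{X}}^{-1}(\mathbf{p}) - F_X^{-1}(\mathbf{p})\|_\infty,$$
and to control each term separately: the deterministic bias via \autoref{quantiledeviation}, and the stochastic part by applying \autoref{convThm} to the noisy dataset $\tilde{\mathbf{X}}$. The $\epsilon$-DP guarantee comes for free from \autoref{preprocessionglemma}, since HSJointExp is just JointExp precomposed with the addition of i.i.d.\ symmetric noise.

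For the bias I specialize \autoref{quantiledeviation} to $w$ uniform on $[-\alpha,\alpha]$ and take $t = \alpha$, so that $f(\alpha) = 0$. The upper bound reads $F_{\tilde{X}}^{-1}(p) \leq F_X^{-1}(p) + \alpha$, and the lower bound, after letting $\delta \to 0^+$ and using $-F_{-X}^{-1}(1-p+\delta) = F_X^{-1}(p-\delta)$, yields $F_X^{-1}(p^-) - \alpha \leq F_{\tilde{X}}^{-1}(p)$. Since $F_X^{-1}$ is left-continuous outside the countable set of values $p$ that hit a plateau of $F_X$ (i.e.\ the $p$'s corresponding to the boundary of some component of $\mathcal{O}$), for Lebesgue-almost-every $\mathbf{p}$ one has $\|F_{\tilde{X}}^{-1}(\mathbf{p}) - F_X^{-1}(\mathbf{p})\|_\infty \leq \alpha$, and I will fix $\alpha \leq \delta/2$.

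For the stochastic term I must verify the hypothesis of \autoref{convThm} for the distribution of $\tilde{X}$. Writing $\pi_{\tilde{X}}(t) = \tfrac{1}{2\alpha}\,\mathbb{P}_X([t-\alpha, t+\alpha])$ shows that $\pi_{\tilde{X}}$ decomposes into a Lipschitz contribution coming from the continuous component $\pi_Y$ and a piecewise constant contribution coming from the finitely many Diracs $x_1,\dots,x_K$, with jump discontinuities only at the finite set $\{x_k \pm \alpha\}_k$; in particular $\pi_{\tilde{X}}$ is piecewise continuous. Its zero set is a finite union of intervals obtained by shrinking, by $\alpha$ on both sides, the components of $\mathcal{O}$ that contain no atom.

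The main obstacle is then to guarantee, for some $\beta>0$, that each target quantile $F_{\tilde{X}}^{-1}(p_i)$ sits in a $\beta$-neighborhood where $\pi_{\tilde{X}}$ is simultaneously strictly positive and continuous. The ``bad'' $\mathbf{p}$'s to avoid are those for which some $F_{\tilde{X}}^{-1}(p_i)$ hits a jump location $x_k\pm\alpha$ or falls inside a zero interval of $\pi_{\tilde{X}}$; because $F_{\tilde{X}}$ is strictly increasing on the support of $\pi_{\tilde{X}}$, each such condition pins some coordinate $p_i$ to a finite set of values or excludes a finite union of open sub-intervals of $(0,1)$, so the global exceptional set is Lebesgue-null in $(0,1)^m$. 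For any Lebesgue-generic $\mathbf{p}$ I take $\alpha$ small enough that $F_{\tilde{X}}^{-1}(p_i)$ stays close to $F_X^{-1}(p_i)$, which either coincides with an atom (where convolution with the box kernel makes $\pi_{\tilde{X}}$ strictly positive and locally constant between neighboring jumps) or lies in the strict positivity zone of $\pi_Y$; in either case the required $\beta$ exists. \autoref{convThm} then gives $\|\mathbf{q} - F_{\tilde{X}}^{-1}(\mathbf{p})\|_\infty \leq \beta \leq \delta/2$ with probability $1 - o_n(1)$, and combining with the bias bound concludes the proof.
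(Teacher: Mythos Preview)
Your proposal is correct and follows essentially the same route as the paper: decompose via the triangle inequality, control the bias $\|F_{\tilde X}^{-1}(\mathbf p)-F_X^{-1}(\mathbf p)\|_\infty$ by specializing \autoref{quantiledeviation} to uniform noise with $t=\alpha$ and $f(t)=0$, verify that $\pi_{\tilde X}$ is piecewise continuous with a finite-interval zero set so that \autoref{convThm} applies to the noisy data, and conclude. One small presentational wrinkle: you describe the ``bad'' set of $\mathbf p$'s in terms of $F_{\tilde X}^{-1}$ and the jump locations $x_k\pm\alpha$, which depend on $\alpha$, and then write ``for any Lebesgue-generic $\mathbf p$ I take $\alpha$ small enough'', which reads as circular; the paper avoids this by simply fixing $\alpha=\delta/2$ up front and then identifying the (finite) exceptional set of $p$'s for that fixed $\alpha$, which you could do as well.
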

The proof is in \Cref{convThmHSJEproof}.
\Cref{convThmHSJE} states in particular that many distributions that satisfy the hypothesis of \Cref{JointExpFailureTheorem} and on which JointExp is not consistent also satisfy the hypothesis of \Cref{convThmHSJE} and HSJointExp can thus achieve arbitrary levels of precision on them (provided $n$ is large enough).

As highlighted by \Cref{HSJointExpcst}, working on much stricter distribution classes can lead to numerically tractable optimal levels of noise.

\subsection{Privacy Amplification of HSJointExp}

A final property that we would like to explore is the possible amplification of privacy of HSJointExp.
Indeed, adding Laplace or Gaussian noise to bounded quantities is a common way to make them private \cite{dwork2006calibrating}. Furthermore, it is well known that some preprocessing steps (prior to the application of an already private mechanism) increase the provable privacy of the overall mechanism. This is for instance the case with subsampling \cite{DBLP:journals/corr/abs-1807-01647}. Consequently, one would think that adding noise to the data does not only preserve the privacy guarantees of the original mechanism (as stated by \Cref{preprocessionglemma}), but has reasonable chances to make it more private.
In order to evaluate the actual privacy of our mechanism, 
we investigate its privacy loss:
\begin{equation*}
    \mathcal{L}(\mathbf{X}, \mathbf{Y}, \mathbf{q}) :=
    \frac{d\mathbb{P}/d\mathbf{q} \left(\mathcal{E}_{u_{\text{JE}}}^{(2/\epsilon)}(\tilde{\mathbf{X}}) = \mathbf{q}\right)}{d\mathbb{P}/d\mathbf{q} \left(\mathcal{E}_{u_{\text{JE}}}^{(2/\epsilon)}(\tilde{\mathbf{Y}}) = \mathbf{q}\right)}
\end{equation*}
for $\mathbf{Y} \sim \mathbf{X}$ and $\mathbf{q} \in O$ where $d\mathbb{P}/d\mathbf{q} \left(\mathcal{E}_{u_{\text{JE}}}^{(2/\epsilon)}(\tilde{\mathbf{X}}) = \mathbf{q}\right)$ refers to the value of the density of HSJointExp applied to $\mathbf{X}$ at $\mathbf{q}$. 
For a given dataset $\mathbf{X}$, we define $\epsilon_{\text{eff}} := \sup_{\mathbf{X} \sim \mathbf{Y}} \sup_{\mathbf{q}} \log \left(\mathcal{L}(\mathbf{X}, \mathbf{Y}, \mathbf{q}) \right)$ the effective difficulty of distinguishing $\mathbf{X}$ from any of its neighbors.
We always have that $\epsilon_{\text{eff}} \leq \epsilon$ but we would like to measure the difference between the two and its dependence on the noise level.
The theoretical study of such is out of the scope of this article and is left for future work, but we conduct a numerical analysis in \Cref{section5}.

\section{Numerical Results}
\label{section5}

This section presents the behaviors of JointExp, the Inverse Sensitivity mechanism and HSJointExp on synthetic and on on real-world distributions. In particular, \ref{dataset_section} is devoted to the presentation of the distributions of interest. \Cref{numericalperformance} numerically studies the performance of the algorithms on the above-mentioned distributions. And finally, \Cref{numericalprivacy} looks at the possible numerical gain of privacy resulting of the noise addition.

\subsection{Distributions}
\label{dataset_section}

We claimed that HSJointExp has a huge advantage over regular JointExp in the case of distributions with isolated atoms. In order to test it numerically, we propose to do so with synthetic data in the first place. Indeed, in allows us to tune various interesting quantities. For real world distributions, it is harder to identify which ones satisfy the condition of having isolated atoms. We propose to evaluate the performance of the algorithms by identifying a real-wold distribution with the empirical distribution of a real-world dataset. The concentration of this dataset (i.e. how peaked its histogram is) is then the decisive criterion: The more concentrated it is, the more suboptimal JointExp/IS is expected to be compared to the smoothed variants.

\begin{figure*}[t!]
    
      \centering
      \includegraphics[width=\linewidth]{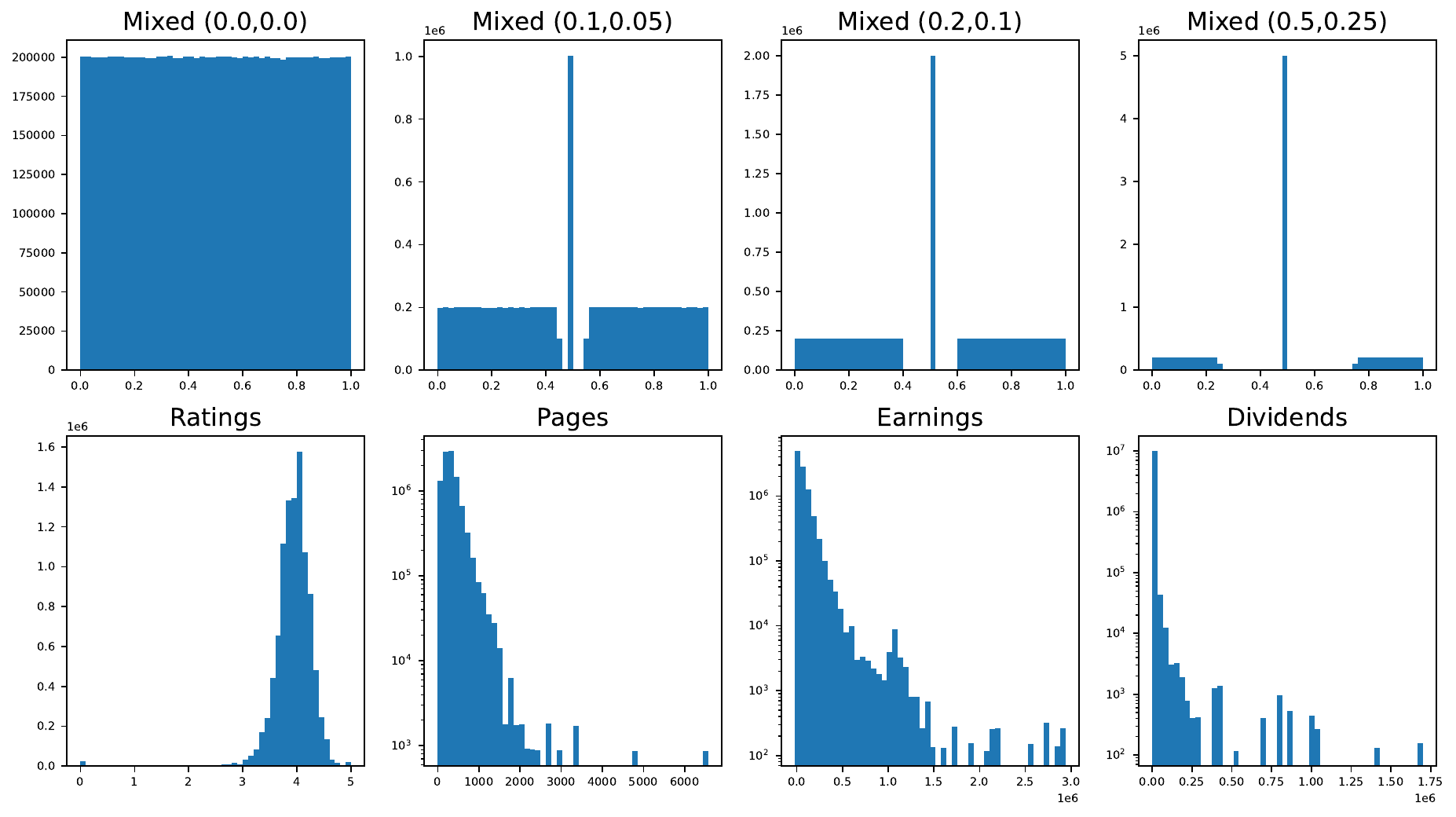}  
    \centering
    Histograms representing $n=10^7$ data points sampled from the original distributions and binned in $50$ bins. Note that for Pages, Earnings and Dividends, the vertical axis is in $\log_{10}$-scale.
    \caption{Distributions used for experiments}
    \label{distributions}
\end{figure*}

\begin{figure*}[t!]
      \centering
      \includegraphics[width=\linewidth]{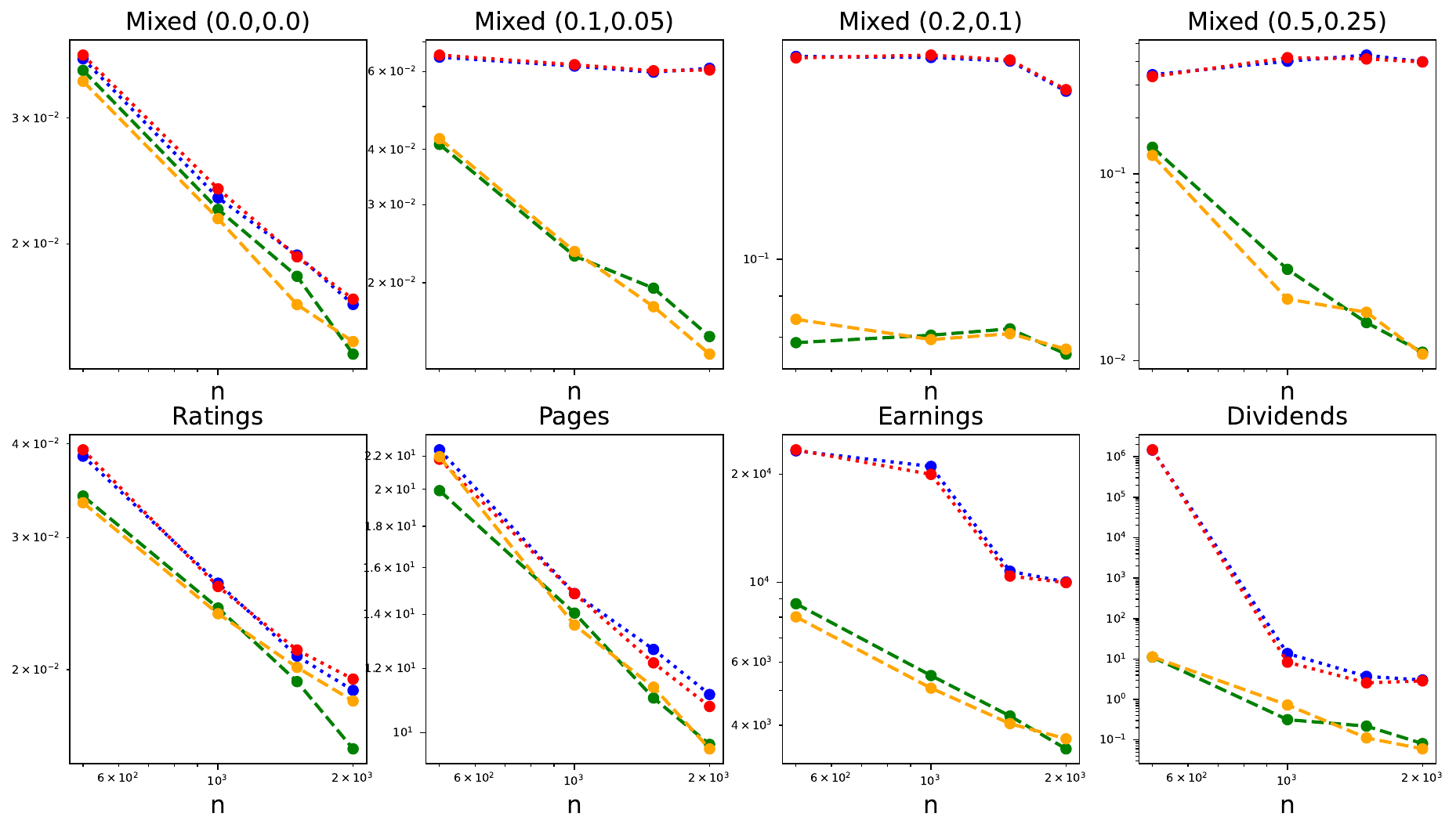}  
      \\
    \centering
    \includegraphics[width=0.9\linewidth]{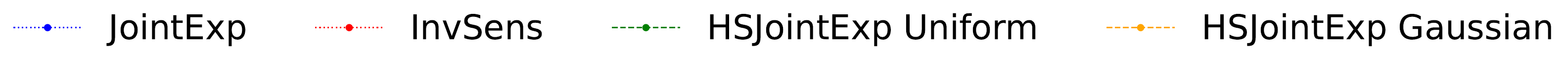}  \\
    \centering

    The vertical axis reads the error $\mathbb{E} \left( \|\hat{\mathbf{q}} - F^{-1}(\mathbf{p})\|_\infty \right)$ where $\mathbf{p} = \left( \frac{1}{m+1}, \dots, \frac{m}{m+1}\right)$ for $m=8$, $\epsilon=1$, $\hat{\mathbf{q}}$ is the private estimator, and $\mathbb{E}$ is estimated by Monte-Carlo averaging over $50$ runs. For HSJointExp Uniform and Gaussian, the optimal noise level on a discretization of $\log_{10}$-resolution $2$ of $[10^{-10}, 10^{4}]$ is selected. Note that both axis are in $\log_{10}$-scale.
    \caption{Error of the estimators as a function of $n$}
    \label{consistency}
\end{figure*}

\paragraph{Mixed distributions (synthetic).}
For $p \in [0, 1]$ and $\delta \in [0, 1/2]$, we define the \emph{Mixed} distribution of parameters $(p, \delta)$ as the distribution with support in $[0, 1/2-\delta] \cup \{1/2\} \cup [1/2+\delta, 1]$ such that if a random variable $X$ follows this distribution, we have $\mathbb{P}(X = 1/2) = p$, $\mathbb{P}(X \in [0, 1/2-\delta]) = \mathbb{P}(X \in [1/2+\delta, 1])$, and conditionally to the event $(X \in [0, 1/2-\delta])$ or to the event $(X \in [1/2+\delta, 1])$, $X$ is uniform. In particular, the mixed distribution of parameters $(0, 0)$ is the uniform distribution on $[0, 1]$. In order to better visualize such distributions, sampled histograms are represented in \Cref{distributions}.
The parameters $\epsilon$ and $\delta$ allow tuning, respectively, the probability of the atom and its isolation. The bigger they are, the more HSJointExp is expected to outperform the non-smoothed variants.

\paragraph{Pages and Ratings (real-world).} The distributions that we call \emph{Pages} and \emph{Ratings} correspond to the empirical distributions of a collection of ratings and of number of pages of books from the Goodreads-Books dataset \cite{goodreadsbooks}. Gillenwater et al. \cite{gillenwater2021differentially} used the same datasets as numerical evidences of the performance of JointExp for estimating empirical quantiles. Again, sampled histograms are represented in \Cref{distributions}. The distributions look relatively smooth (i.e. not too peaked and with a relatively small support), and as a result, we can expect the gap between JointExp/IS and HSJointExp to be negligible. 

\paragraph{Earnings and Dividends (real-world).}
The distributions that we call \emph{Earnings} and \emph{Dividends} correspond respectively to the personal incomes and personal incomes from dividends categories of the US 2021 Census \cite{census2021}. Again, sampled histograms are represented in \Cref{distributions}. We can notice that contrary to the previous two real-world distributions, these two are much more concentrated. For Earnings, the concentration is due to the existence of categories of extremely high revenues. As a consequence, the support of the distribution is necessary big, and the algorithms that seek for privately estimating the quantiles have little information about the localization of the data points. On the other hand, the vast majority of people declare revenues inferior to $500000$ dollars, resulting in the high concentration of the distribution close to $0$. For Dividends, the support is smaller, but since a big part of the population simply does not have any revenues from dividends, the distribution shows an accumulation point at $0$. With both distributions, we expect the smoothing operation to vastly improve the performance of JointExp/IS.

\subsection{Numerical Performance}
\label{numericalperformance}

\Cref{consistency} and \Cref{noiselvl} Compare the performance of JointExp, the Inverse Sensitivity mechanism and two variants of HSJointExp with uniform and Gaussian noise structure respectively on the distributions presented in \Cref{distributions}.

\paragraph{Complements on HSJointExp Uniform and Gaussian.}

The mechanism that we call \emph{HSJointExp Uniform} is the application of JointExp post addition of centered uniform noise. If $[a, b]$ was our estimate of the support of the distribution, we apply JointExp on $[a-\sigma\sqrt{3}, b+\sigma\sqrt{3}]$ where $\sigma$ is the standard deviation of the noise. In  \emph{HSJointExp Gaussian}, the centered uniform noise is replaced by centered Gaussian noise. The support of the resulting distribution is now infinite, and the projection step is therefore mandatory. We chose to project the data points in $[a-5\sigma, b+5\sigma]$  where $\sigma$ is the standard deviation of the noise in order to make sure that most of the points will remain untouched by the projection step.

\begin{figure*}[h!]
    
      \centering
      \includegraphics[width=\linewidth]{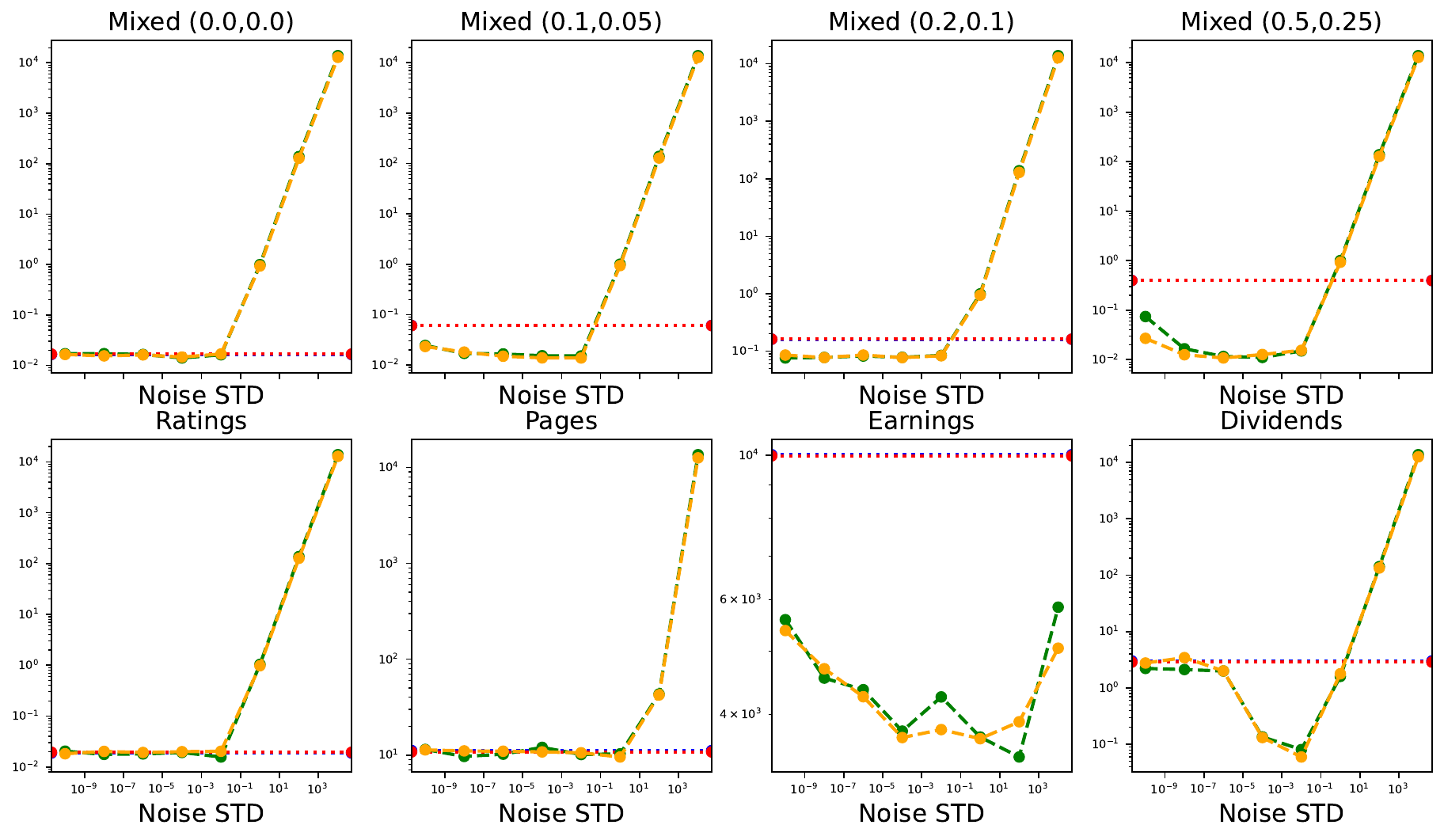}  
      \\

    \centering
    \includegraphics[width=0.9\linewidth]{legend2.pdf}  \\
   The vertical axis reads the error $\mathbb{E} \left( \|\hat{\mathbf{q}} - F^{-1}(\mathbf{p})\|_\infty \right)$ where $n=2000$, $\mathbf{p} = \left( \frac{1}{m+1}, \dots, \frac{m}{m+1}\right)$ for $m=8$, $\epsilon=1$, $\hat{\mathbf{q}}$ is the private estimator, and $\mathbb{E}$ is estimated by Monte-Carlo averaging over $50$ runs. For HSJointExp Uniform and Gaussian, the optimal noise level on a discretization of $\log_{10}$-resolution $2$ of $[10^{-10}, 10^{4}]$ is selected. Note that both axis are in $\log_{10}$-scale. The horizontal axis reads the standard deviation of the smoothing noise. JointExp and the Inverse Sensitivity mechanism are represented by horizontal bars, since they do net depend on the noise level.
    \caption{Dependence on the smoothing level}
    \label{noiselvl}
    \end{figure*}

\paragraph{Analyzing the results of \Cref{consistency}.}

The first important fact to notice is the similar performance of JointExp and the Inverse Sensitivity mechanism, confirming the theoretical results of \Cref{sec:link}. The second is the similar performance of HSJointExp Uniform and Gaussian, showing that the structure of the noise, given that it is regular enough, is not of critical importance. Finally, and probably the most important, we can compare the performance of JointExp/IS and of HSJointExp. On Mixed$(0,0)$ (i.e. the uniform distribution on $[0, 1]$), Ratings and Pages, the two algorithms perform identically. This is what we expected given the smoothness of the distributions. On more concentrated distributions like Earnings and Dividends on the other hand, we see that HSJointExp vastly improves the performance of JointExp, sometimes by multiple orders of magnitude. Finally, Mixed$(0.1,0.05)$, Mixed$(0.2,0.1)$ and Mixed$(0.5,0.25)$ demonstrate that the more isolated and probable the atoms of the distribution are, the more suboptimal JointExp is compared to the smoothed variants.

\paragraph{Analyzing the results of \Cref{noiselvl}.}
\Cref{noiselvl} shows the same results as \Cref{consistency} but with an emphasis on the dependence on the noise level. For instance, we can see that when the smoothing operation allows for better performance, it is often the case for a large range of smoothing levels. Finally, we can numerically observe two limit behaviors that are quite intuitive : When the noise level tends to $0$, HSJointExp performs as JointExp. Indeed, in this case, the smoothing trick has almost no effect on the distribution. When the noise level tends to $+\infty$ on the other hand, the performance of HSJointExp is terrible. This is also quite intuitive, since the smoothed distribution has lost almost all correlation with the original distribution. For all these reasons, we recommend tuning the noise as in the extreme case of the Dirac (see \Cref{HSJointExpcst}) since this value is small enough to not fall in the regime where the performance are degraded by the smoothing, but it still greatly improves the performance on degenerated distributions.

\subsection{Privacy Amplification}
\label{numericalprivacy}

In \Cref{effective_epsilon} we numerically estimate $\epsilon_{\text{eff}}$ in the following setup: For each of the datasets (noted $\mathbf{X}$), we estimate the median using HSJointExp with Laplace noise tuned with $\epsilon=1$. We estimate $\mathcal{L}(\mathbf{X}, \mathbf{Y}, \mathbf{q})$ for any $\mathbf{Y} \sim \mathbf{X}$ by discretizing the search space of $\mathbf{Y}$ and by Monte Carlo averaging to integrate with respect to the noise. 

\begin{figure*}[ht!]
    \begin{subfigure}{.24\textwidth}
      \centering
      \includegraphics[width=\linewidth]{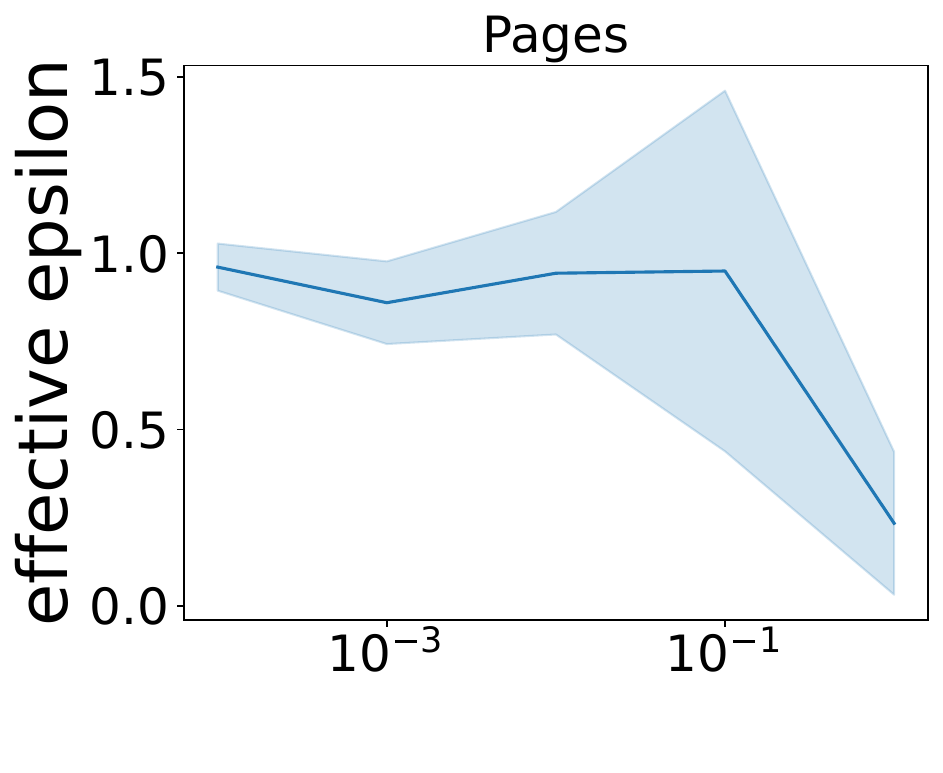}  
      \label{fig:sub-first}
    \end{subfigure}
    \begin{subfigure}{.24\textwidth}
      \centering
      \includegraphics[width=\linewidth]{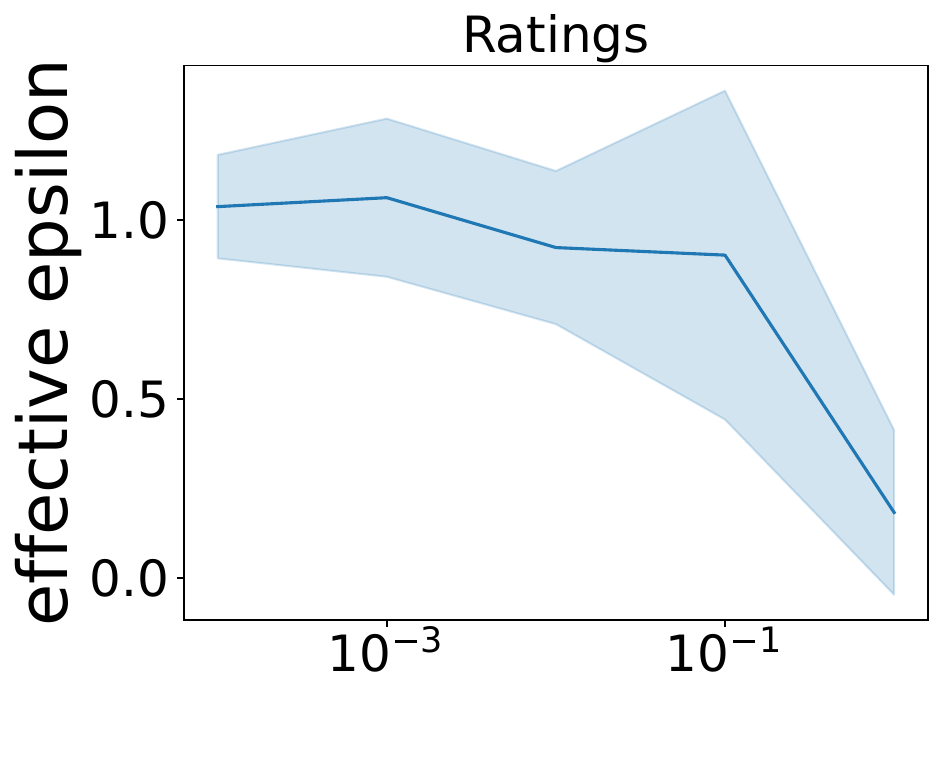}  
      \label{fig:sub-second}
    \end{subfigure} 
    \begin{subfigure}{.24\textwidth}
        \centering
        \includegraphics[width=\linewidth]{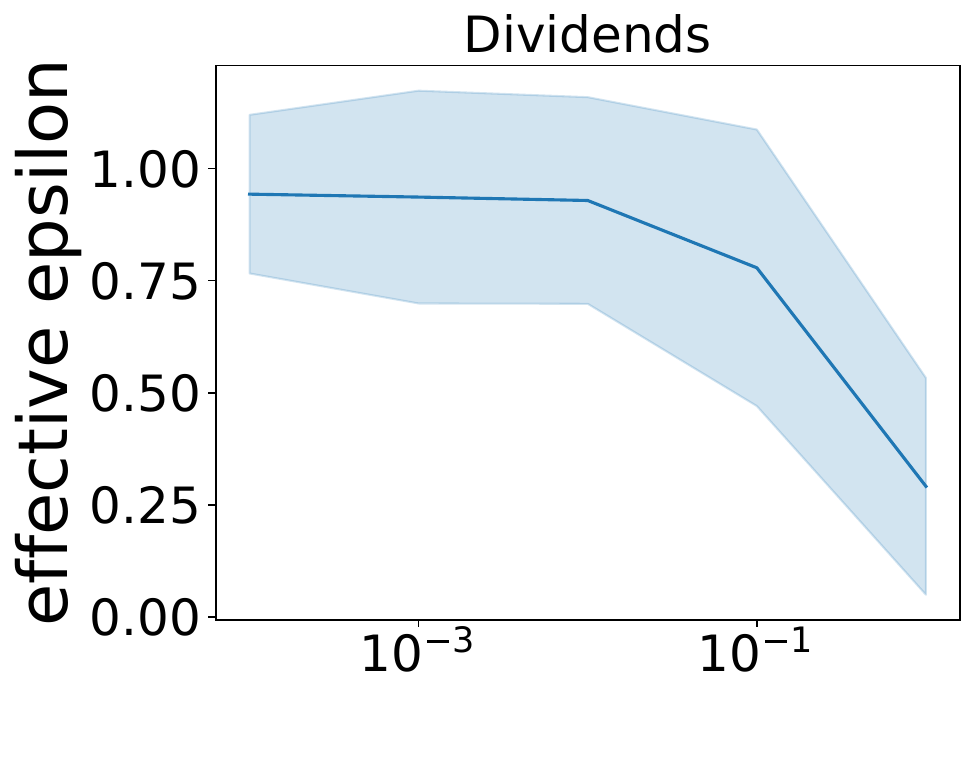}  
        \label{fig:sub-second}
      \end{subfigure}
    \begin{subfigure}{.24\textwidth}
        \centering
        \includegraphics[width=\linewidth]{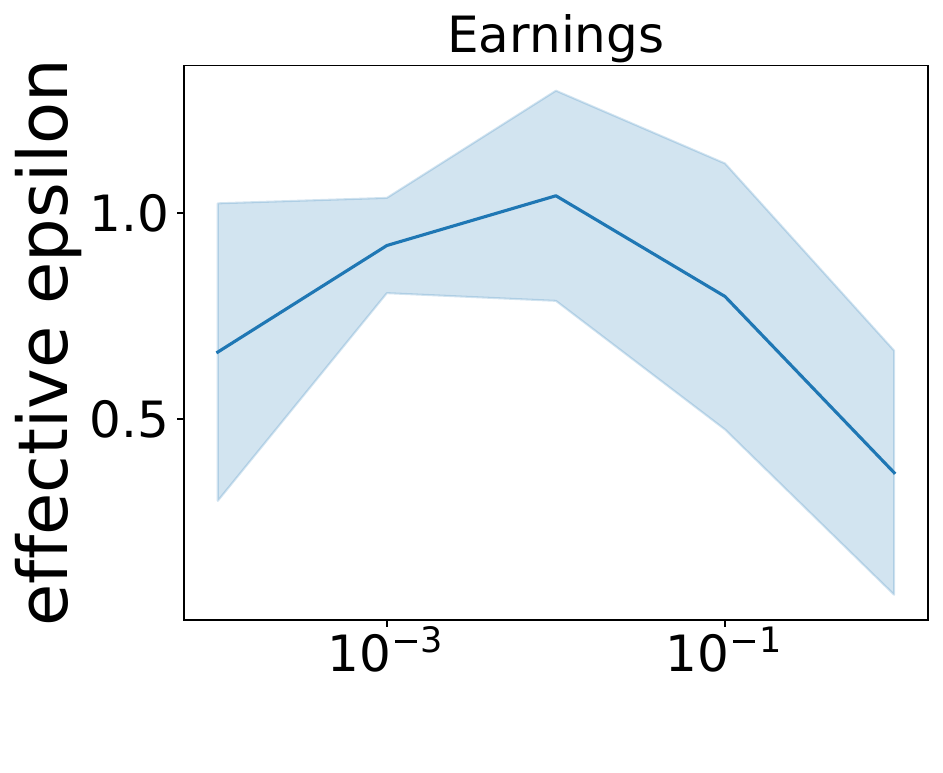}  
        \label{fig:sub-second}
    \end{subfigure}
    \centering
    The horizontal axis represents the standard deviation of the noise divided by the length of the support of the distribution. $\epsilon = 1$.
    \caption{Evolution of $\epsilon_{\text{eff}}$ for the median estimation}
    \label{effective_epsilon}
\end{figure*}

The variance of the resulting $\epsilon_{\text{eff}}$ is high, but we can see two regimes: For low values of the noise, the privacy of the mechanism is unchanged. For high values of noise, on the other hand, $\epsilon_{\text{eff}} < \epsilon$ and differentiating the datasets from their neighbors is harder.

By crossing the results with \Cref{noiselvl} however, it seems that the privacy amplification only occurs for values of the noise for which the utility of HSJointExp is already degraded compared to regular JointExp.

\section{Conclusion}

We highlight the connections between the general inverse sensitivity mechanism applied to the private estimation of multiple empirical quantiles and the recently published ad-hoc algorithm JointExp \cite{gillenwater2021differentially}. We prove the consistency of this algorithm when used as a statistical estimator of the statistical quantiles of the underlying distribution for smooth enough distributions. These results are key to the understanding of JointExp, which wasn't endowed with any theoretical utility results before despite a excellent numerical behavior. Furthermore, we demonstrate that isolated atoms in the distribution cause JointExp to be inconsistent, and propose a numerically tractable fix that solves this issue. The numerical experiments on both real-world and synthetic distributions backup the theoretical claims of this article, and support our suggestion to use our variant instead of JointExp in practice.

Very recently, a new approach was proposed by Kaplan et al. in \cite{kaplan2022differentially}: they present an algorithm that experimentally beats JointExp when the number of empirical quantiles of a dataset is high. Determining under what conditions and in what circumstances this empirical gap reflects in terms of statistical utility is left as future work. 

\section*{Acknowledgments}
Aurélien Garivier acknowledges the support of the Project IDEXLYON of the University of Lyon, in the framework of the Programme Investissements d'Avenir (ANR-16-IDEX-0005), and Chaire SeqALO (ANR-20-CHIA-0020-01). This project was supported in part by the AllegroAssai ANR project ANR-19-CHIA-0009.


\newpage

\bibliographystyle{plain}
\bibliography{biblio}

\newpage
\appendix

\section{Sampling from the inverse sensitivity mechanism}
\label{sampleis}

In this subsection, we explain how to sample exactly from the inverse sensitivity mechanism for multiple quantiles in polynomial time and memory. It is essentially an adaptation from the JointExp algorithm, and hence we will use the same notations when possible. For simplicity, $\mathbf{X}$ is assumed to be sorted.

The sampling density of $\mathcal{E}_{\tilde{u}_{\text{IS}}}^{(2/\epsilon)}(\mathbf{X})$ is constant on sets $\big([X_{i_1}, X_{i_1 + 1}) \times \ldots
\times \dots \times [X_{i_{m}}, X_{i_m +1}) \big) \cap [a, b]^m \!\!\!\!\!\nearrow$ for $\mathbf{i}=(i_1, \dots, i_m) \in O'$ where
\begin{equation*}
    O'= \left\{\mathbf{i} \in \left\{0, \dots, n\right\}^m, 0 \leq i_1\leq \dots \leq i_m \leq m \right\} \;.
\end{equation*}
Hence, a finite sampling algorithm for $\mathcal{E}_{u_{\text{IS}}}^{(2/\epsilon)}(\mathbf{X})$ is to:
\begin{itemize}
    \item sample $\mathbf{i}=(i_1, \dots, i_m) \in O'$ under $\mathbb{P}_{O'}$;
    \item sample $q_j'$ uniformly in $[X_{i_j}, X_{i_j + 1})$, independently for all $j$ in $\{1 \dots m\}$;
    \item output $(q_j')_{j \in \{1 \dots m\}}$ sorted by increasing order;
\end{itemize}
with the probability $\mathbb{P}_{O'}$  defined on $O'$ as
\begin{equation}
    \label{probfact}
    \mathbb{P}_{O'}(\mathbf{i}) \propto \frac{1}{\gamma(\mathbf{i})}
    \Pi_{j=1}^{m+1} \phi(i_{j-1}, i_{j}, j) \Pi_{j=1}^{m} \tau(i_j)
\end{equation}
where, if we denote by $\text{count}_{\mathbf{i}}(i)$ the number of occurrences of integer $i$ in the ordered tuple $\mathbf{i}$,
\begin{equation*}
    \forall \mathbf{i} \in O', \gamma(\mathbf{i}) = \Pi_{i=0}^m \text{count}_{\mathbf{i}}(i)!\;,
\end{equation*}
\begin{equation*}
    \forall i \in \{0, \dots, m\}, \tau(i) = X_{i+1} - X_i \;,
\end{equation*}
and for $0\leq i,i' \leq m$ and $1\leq j \leq m+1$,
\begin{equation*}
    \begin{aligned}
        \phi(i, i', j) = \begin{cases} 
            0, & \mbox{if } i'<i \\ 
            e^{-\frac{\epsilon}{2} \left( \frac{1}{2} |\hat{\delta}(i, i', m+1)| \right)}, & \mbox{if } j=m+1 \\
            e^{-\frac{\epsilon}{2} \left( \frac{1}{2} |\hat{\delta}(i, i', j)| \right)+ \one_{\mathbb{R}_+}(\hat{\delta}(i, i', j))} , & \mbox{otherwise }
        \end{cases}
    \end{aligned}
\end{equation*}
with $\hat{\delta}(i, i', j) = i' - i - (\lceil n p_{j} \rceil - \lceil n p_{j-1} \rceil)$.

Since $O'$ has a finite cardinality bounded by $(n+1)^m$, it is possible to compute the probability of all the elements in that space and to sample this way. However, the fact that this complexity is exponential in $m$ makes it unusable in practice.  \cite{gillenwater2021differentially} present an algorithm that allows to sample from any distribution that factorizes in an analog form of \eqref{probfact} that has a complexity (both in time and space) of $O(n^2 m + m^2 n)$. Furthermore, if the function $\phi(i, i', j)$ can be rewritten as $\phi'(i'-i, j)$ (which is the case in our problem), the complexity becomes $O(mn\log n + m^2n)$. Overall, in order to sample efficiently from the inverse sensitivity mechanism, one can use Algorithm 1 proposed by  \cite{gillenwater2021differentially} by taking great care of using a sensitivity of $1$ (instead of $2$) and by replacing the function $\phi$ by the one used in this article.

\section{Inverse sensitivity smoothing \cite{asi2020near}}
\label{ISSmoothing}
\subsection{General principle}

When the output space $O$ is a subset of a Euclidean space, the theory of Inverse Sensitivity comes with a smoothing operation  with some parameter $\rho > 0$. The utility function $u_{\text{IS}}$ can be replaced with 
\begin{equation*}
    u_{\text{IS}}^\rho(\mathbf{X}, o) = \sup_{o' \in O: \|o-o'\|_2\leq \rho} u_{\text{IS}}(\mathbf{X}, o') \;.
\end{equation*}
It is easy to see that this new utility function has sensitivity $\Delta u_{\text{IS}}^\rho = 1$. 
Contrary to the non-smoothed inverse sensitivity mechanism which only comes with guarantees for finite output spaces $O$, this smoothed version gives more general results that only rely on a few mathematical tools. Using the notion of \emph{modulus of continuity} of the target function $\mathcal{Q}$, defined as
\begin{equation*}
    \omega_{\mathcal{Q}}(\mathbf{X},k) = \sup_{\mathbf{X}' \in \mathfrak{X}^n: d(\mathbf{X}, \mathbf{X}') \leq k}\Big\{ \|\mathcal{Q}(\mathbf{X}) - \mathcal{Q}(\mathbf{X}')\|_2\Big\} \;,
\end{equation*}
and the corresponding image 
\begin{equation*}
    W_{\mathcal{Q}}(\mathbf{X}, k) =\{ \mathcal{Q}(\mathbf{X}')- \mathcal{Q}(\mathbf{X}): d(\mathbf{X},\mathbf{X}') \leq k\} \;,
\end{equation*}
one can bound the estimation error \cite{asi2021nips} assuming that $\text{diam}_2(\mathcal{Q}(\mathfrak{X}^n)) \leq D$: for $1 \leq k \leq n$
\begin{equation*}
\begin{aligned}
    \mathbb{P}\Big(\| \mathcal{E}_{u_{\text{IS}}^\rho}^{(2/\epsilon)}(\mathbf{X}) - \mathcal{Q}(\mathbf{X})\|_2 &\geq \omega_{\mathcal{Q}}(\mathbf{X},k) + \rho\Big)\\
    &\leq e^{-k\epsilon/2}\left(\tfrac{D}{\rho}\right)^m
    \end{aligned}
\end{equation*}
with $m$ the ambient space dimension (i.e., $O \subset \mathbb{R}^m$).
The original authors consider a smoothing parameter $\rho = 1/n^r$ for some $r>0$ and 
$k$ of the order of $(4 r m \log n) / \epsilon$,
which yields an estimation error bounded with high probability by the modulus of continuity: With high probability
\begin{equation}
    \label{highprobbound}
    \|\mathcal{E}_{u_{\text{IS}}^{1/n^r}}^{(2/\epsilon)}(\mathbf{X}) - \mathbf{X}\|_2 \leq O\left(\omega_{\mathcal{Q}}(\mathbf{X}, (4 r m \log n)/\epsilon) + 1/n^r\right) \;.
\end{equation}
This theory also provides an optimality result. Under the rather strong hypothesis that there exists a uniform $c>0$ such that for all $1 \leq k \leq n$ and $\mathbf{X} \in \mathfrak{X}^n$,
\begin{equation}
    \label{uniformcond}
    W_{\mathcal{Q}}(\mathbf{X}, k) \supseteq c . \omega_{\mathcal{Q}}(\mathbf{X}, k) .  \mathbb{B}_2^m
\end{equation} 
where $\mathbb{B}_2^m$ is the $l_2$ ball in $\mathbb{R}^m$,
the best $\epsilon$-DP algorithm roughly behaves the same way in a local minimax sense up to a logarithmic factor \cite{asi2021nips}:

\begin{equation*}
    \begin{aligned}
    \inf_{\mathcal{A} \in \mathcal{A}_\epsilon}
    \sup_{\mathbf{X}': d(\mathbf{X}, \mathbf{X}')\leq m/\epsilon}
    &\mathbb{E}(\|\mathcal{A}(\mathbf{X}') - \mathcal{Q}(\mathbf{X}')\|_2)\\ 
    &\geq \Omega(\omega_{\mathcal{Q}}(\mathbf{X}, m / \epsilon))
    \end{aligned}
\end{equation*}
where $\mathcal{A}_\epsilon$ is the class of $\epsilon$-DP algorithms.

\subsection{For the multiquantile problem}

\paragraph{The problem of sampling:}

The hypothesis of \Cref{mainth} restrict our ability to compute the inverse sensitivity of quantile candidates without collisions and that do not overlap with any of the data intervals. Computing the supremum in the definition of the smoothed inverse sensitivity would not only require to handle those cases, but also to have an algorithm faster than looking at all the possibilities. We did not manage to overcome that difficulty hence the reason for our heuristic smoothing (see \Cref{hssmoothing}).

\paragraph{Behavior of the modulus of continuity:}
The modulus of continuity measures the maximal variation of a function on a ball for Hamming distance $k$. Here we derive a majoration for the multiquantile problem.
Assuming that $\mathbf{X} \in \mathfrak{X}^n$ is sorted, by moving a single point ($k=1$) of $\mathbf{X}$, two behaviors can happen:
Either it exactly matches one of the quantiles, and the corresponding estimate can then vary continuously in the interval between the data point below and the data point above; or it did not match any quantile, but it can still shift the entire ordered statistics by one data point. This bounds the values of the function $\mathcal{Q}$ at Hamming distance 1 of $\mathbf{X}$ (i.e., $W_{\mathcal{Q}}(\mathbf{X}, 1)$):
\begin{equation*}
    \begin{aligned}
    W_{\mathcal{Q}}(\mathbf{X}, 1) + \mathcal{Q}(\mathbf{X})\subseteq \bigcup_{i=1}^m 
    &\Big\{ \mathbf{X}_{[\lceil n p_1\rceil-1:\lceil n p_1\rceil+1]} \\ 
    &\times  \dots \\ 
    &\times \mathbf{X}_{[\lceil n p_{i-1}\rceil-1:\lceil n p_{i-1}\rceil+1]} \\
    &\times [X_{\lceil n p_{i}\rceil-1}, X_{\lceil n p_{i}\rceil+1}] \\
    &\times \mathbf{X}_{[\lceil n p_{i+1}\rceil-1:\lceil n p_{i+1}\rceil+1]} \\ 
    &\times  \dots \\
    &\times \mathbf{X}_{[\lceil n p_{m}\rceil-1:\lceil n p_{m}\rceil+1]} \Big\}
    \end{aligned}
\end{equation*}
where we use the notation from computer science $\mathbf{X}_{[i:j]} = (X_i, \dots, X_j)$.
When $k\geq 1$ points are moving, the same mechanics arise where the ordered statistics is shifted by at most $k$ points and where we have at most $k$ degrees of freedom. This yields an analog majoration with $k$ intervals in each cartesian product:
\begin{equation}
    \label{majW}
    \begin{aligned}
    W_{\mathcal{Q}}(\mathbf{X}, k) + \mathcal{Q}(\mathbf{X})\subseteq \bigcup_{1 \leq i_1 \leq \dots \leq i_k \leq m } 
    \bigotimes_{j=1}^{m} \mathcal{I}(\mathbf{X}, j, \mathbf{i}, k) \;,
    \end{aligned}
\end{equation}
where, if we note $\mathbf{i} = (i_1, \dots, i_k)$,
\begin{equation*}
    \begin{aligned}
        \mathcal{I}(\mathbf{X}, j, \mathbf{i}, k) = \begin{cases} [X_{\lceil n p_{j}\rceil-k}, X_{\lceil n p_{j}\rceil+k}], & \mbox{if } j \in \mathbf{i} \\ \mathbf{X}_{[\lceil n p_{j}\rceil-k:\lceil n p_{j}\rceil+k]}, & \mbox{if } j \notin \mathbf{i} \end{cases}\;.
    \end{aligned}
\end{equation*}
Using~\eqref{majW}, the modulus of continuity is bounded as
\begin{equation}
    \label{majw}
    \omega_{\mathcal{Q}}(\mathbf{X}, k) \leq \sqrt{\sum_{i=1}^m \big( X_{\lceil n p_{i}\rceil+k} - X_{\lceil n p_{i}\rceil-k} \big)^2}\;.
\end{equation}
Hence, when  the dataset $\mathbf{X}$ has many points close to its empirical quantiles, the modulus of continuity $\omega_{\mathcal{Q}}(\mathbf{X}, k)$ should not grow too fast in $k$.

\paragraph{Convergence with high probability}:
The concentration bound (i.e., Equation~\eqref{highprobbound}) along with the upper bound on the modulus of continuity (i.e., Equation~\eqref{majw}) gives that, with high probability
\begin{equation*}
    \label{concentrationquantile}
    \|\mathcal{E}_{u_{\text{IS}}^{1/n^r}}^{(2/\epsilon)}(\mathbf{X}) - O(\mathbf{X})\|_2 \leq O\left(\sqrt{\sum_{i=1}^m \big( X_{\lceil n p_{i}\rceil+\delta } - X_{\lceil n p_{i}\rceil-\delta } \big)^2}  + \frac{1}{n^r}\right) \;,
\end{equation*}
where $\delta = \lceil(4 r m \log n) / \epsilon \rceil$. Hence, whenever the dataset has many points in the neighborhood of the empirical quantiles, we can expect the smoothed inverse sensitivity mechanism for multiquantile (i.e., $\mathcal{E}_{u_{\text{IS}}^{1/n^r}}^{(2/\epsilon)}(\mathbf{X})$) to perform well. This will typically be the case when the data distribution has a Dirac on a quantile whose mass accounts for more than $\frac{4 r m \log n}{n \epsilon}$ or when the data distribution has a density that is strictly positive on a neighborhood of its quantiles and $\delta$ is not too large. 

\paragraph{Local minimax bound} 

In \eqref{majW}, the inner term $\bigotimes_{j=1}^{m} \mathcal{I}(\mathbf{X}, j, \mathbf{i}, k)$ has null Lebesgue measure if $k < m$. Indeed, in this case, at least one factor in the product is countable. As a consequence, denoting $\lambda$ the Lebesgue measure,
\begin{equation*}
    k < m \Rightarrow \lambda(W_{\mathcal{Q}}(\mathbf{X}, k))=0.
\end{equation*}
 Hence, $W_{\mathcal{Q}}$ does not satisfy the uniform condition expressed in~\eqref{uniformcond} and the local minimax bound falls. The only setup where this bound holds is in the case of single quantile estimation (i.e., $m=1$). As a consequence, we may not expect the optimality of the smoothed inverse sensitivity mechanism for multiquantile in the class of $\epsilon$-DP algorithm. We included this negative result for the sake of exploring the theory of inverse sensitivity as a whole. It shows that even if a sampling procedure for the smoothed inverse sensitivity mechanism for multiquantile was to be found, finding an optimal sampling algorithm for private multiple quantiles is still an open question.

\section{Omitted proofs}
\label{proofs}

\subsection{Proof of \Cref{mainth}}
\label{mainthproof}

    If $\mathbf{Y} \in Q^{-1}(\mathbf{q})$ then:
    \begin{itemize}
    \item Each "bin" has the right number of points:
         $\delta(i,\mathbf{Y},\mathbf{q}) = 0$,
         $i \in \left\{2 \dots m+1\right\}$,
         and \\$\delta_{\texttt{closed}}(1,\mathbf{Y},\mathbf{q}) = 0$.
    \item Every point of $\mathbf{q}$ appears in $\mathbf{Y}$: $\mathbf{q} \subseteq \mathbf{Y}$.
    \end{itemize}
    Then we can understand the modifications that have to be made to $\mathbf{X}$ in order to obtain a $\mathbf{Y} \in Q^{-1}(\mathbf{q})$. 
    For the first condition, some points have to be moved from bins in excess to bins in deficit. This procedure accounts for $\sum_{i=2}^{m+1} \delta(i,\mathbf{X},\mathbf{q})_+ + \delta_{\text{closed}}(1,\mathbf{X},\mathbf{q})_+$ operations which can be reformulated as $\frac{1}{2} \sum_{i=2}^{m+1} \left| \delta(i,\mathbf{X},\mathbf{q})\right| + \frac{1}{2} |\delta_{\text{closed}}(1,\mathbf{X},\mathbf{q})|$.
    For the second condition, we have to make sure that for all $i$, $q_i$ belongs to the dataset.
    For a bin in strict deficit, at least a point has to be added to it due to the first condition. Hence, we can make sure to add the associated quantile at no extra cost. 
    For a bin in excess on the other hand, since by hypothesis $\mathbf{q} \cap \mathbf{X} = \emptyset$, a point in the bin will have to be replaced by the associated quantile at an extra cost of $1$. In the end, we find the desired result.

\subsection{Proof of \Cref{JointExpFailureTheorem}}
\label{JointExpFailureTheoremproof}

    Since $\mathbb{P}_{X}(\{q\})>0$ and $\mathbb{P}_{X}(I \setminus \{q\})=0$, there exists a nonempty interval $A$ of $[0, 1]$ such that $\{q\} = F_{X}^{-1}(A)$ with $\lambda(A)\geq \mathbb{P}_{X}(\{q\})$, $\lambda$ referring to Lebesgue measure. Let us prove that any $\mathbf{p}$ with at least one component in $A$ satisfies \eqref{eqpb}. For this, assume that $\mathbf{p}$ has its $i^\text{th}$ entry $p_i$ in $A$.
    Due to the structure of $\mathbb{P}_{X}$, $\mathbb{P}_{X}(\mathbf{X} \cap (I \setminus \{q\}) \neq \emptyset)=0$, hence almost surely it holds that for every $j$ we have either $|X_j - q| \geq \eta > 0$ 
    or $|X_j - q| = 0$. 
    Remember that the output density is a mixture of uniforms on the sets $\big([X_{i_1}, X_{i_1 + 1}) \times \ldots
\times \dots \times [X_{i_{m}}, X_{i_m +1}) \big) \cap [a, b]^m \!\!\!\!\!\nearrow$ for $\mathbf{i}=(i_1, \dots, i_m) \in O'$.
If the $i^\text{th}$ component of the output $q_i$ was to be sampled from a data interval that doesn't admit $q$ in its closure, then $\|\mathbf{q} - F_{X}^{-1}(\mathbf{p})\|_\infty \geq 
\eta$.
If on the other hand $q_i$ was to be sampled from a data interval that does admit $q$ in its closure, then it belongs to an interval $[X_{k}, X_{k +1})$ for some $k$ such that $q \in [X_{k}, X_{k +1}]$ and $X_{k +1} - X_{k } \geq 
\eta$. Conditionally to the fact that there are $m' \leq m$ other quantiles that are sampled from $[X_{k}, X_{k +1}]$, the conditional expectation of $\|\mathbf{q} - F_{X}^{-1}(\mathbf{p})\|_\infty$ can be lower by a (strictly) positive functional ($f(\eta, m')$) of $\eta$ and $m'$ (because the corresponding slice of the output is uniform on $[X_{k}, X_{k +1}]^{m'} \nearrow$. This shows that the risk can be lower bounded by a quantity in $\text{Conv}\{ \eta, f(\eta, 1), \dots, f(\eta, m)\}$ which is then bigger than $\text{min}\{ \eta, f(\eta, 1), \dots, f(\eta, m)\}$ which is positive.

\subsection{Proof of \Cref{preprocessionglemma}}
\label{preprocessinglemmaproof}

    Let $\mathcal{A}$ be a $\epsilon$-DP algorithm on $\mathfrak{X}^n$, $\mathbf{X}, \mathbf{X}' \in \mathfrak{X}^n$ such that 
    $\mathbf{X} \sim \mathbf{X}'$. Then, for every $\mathbf{w} \in \mathbb{R}^n, \text{proj}_{\mathfrak{X}^n}(\mathbf{X}+\mathbf{w}) \sim \text{proj}_{\mathfrak{X}^n}(\mathbf{X}'+\sigma(\mathbf{w}))$ for a specific permutation of the components $\sigma$.
    For each measurable set $\mathcal{S}\subseteq O$ we get
    \begin{equation*}
        \begin{aligned}
            &\mathbb{P}(\mathcal{A}(\text{proj}_{\mathfrak{X}^n}(\mathbf{X}+\mathbf{w})) \in \mathcal{S}) \\
            &=\int_{\mathbb{R}^n} \mathbb{P}_{\mathcal{A}}(\mathcal{A}(\text{proj}_{\mathfrak{X}^n}(\mathbf{X}+\mathbf{w})) \in \mathcal{S}) \mathbb{P}_{\mathbf{w}}(d \mathbf{w})\\
            &\leq e^\epsilon \int_{\mathbb{R}^n} \mathbb{P}_{\mathcal{A}}(\mathcal{A}(\text{proj}_{\mathfrak{X}^n}(\mathbf{X}'+\sigma(\mathbf{w}))) \in \mathcal{S}) \mathbb{P}_{\mathbf{w}}(d \sigma(\mathbf{w}))\\
            &= e^\epsilon \int_{\mathbb{R}^n} \mathbb{P}_{\mathcal{A}}(\mathcal{A}(\text{proj}_{\mathfrak{X}^n}(\mathbf{X}'+\mathbf{w})) \in \mathcal{S}) \mathbb{P}_{\mathbf{w}}(d \mathbf{w})\\
            &= e^\epsilon \mathbb{P}(\mathcal{A}(\text{proj}_{\mathfrak{X}^n}(\mathbf{X}'+\mathbf{w})) \in \mathcal{S})
        \end{aligned}
    \end{equation*}
    which completes the proof.

\subsection{Proof of \Cref{quantiledeviation}}
\label{quantiledeviationproof}

Let $t \geq 0$ such that $1-f(t) > 0$,
    \begin{equation*}
        \begin{aligned}
            &\mathbb{P}\left(X + w \leq F_{X}^{-1}\left( \frac{p}{1 - f(t)}\right) + t \right) \\
            &\geq \mathbb{P}\left(X + w \leq F_{X}^{-1}\left( \frac{p}{1 - f(t)}\right) + t, |w| \leq t \right)\\
            &\geq \mathbb{P}\left(X \leq F_{X}^{-1}\left( \frac{p}{1 - f(t)}\right), |w| \leq t \right)\\
            &\geq \mathbb{P}\left(X \leq F_{X}^{-1}\left( \frac{p}{1 - f(t)}\right)\right) \mathbb{P}\left(|w| \leq t \right)\\
            &\geq \frac{p}{1 - f(t)} \left(1 - f(t) \right) \geq p \;. \\
        \end{aligned}
    \end{equation*}
    So, $F_{\tilde{X}}^{-1}\left( p\right) \leq F_{X}^{-1}\left( \frac{p}{1 - f(t)}\right) + t$.
Let $\delta \in (0, p)$, the same arguments give
\begin{equation*}
        \begin{aligned}
            \mathbb{P}\left(X + w \leq -F_{-X}^{-1}\left( \frac{1-p + \delta}{1 - f(t)}\right) - t \right) \leq p-\delta < p
        \end{aligned}
    \end{equation*}
    which allows concluding with the desired result.

\subsection{Proof of \Cref{convThm}}
\label{convThmproof}

\begin{lemma}
\label[lemma]{chern1}
    Let $\tilde{X}$ be a real random variable with density $\pi_{\tilde{X}}$ and $p \in (0, 1)$. We suppose that $\pi_{\tilde{X}}\geq \pi_{\text{min}} > 0$ on an open neighborhood $\mathcal{N}$ of $F_{\tilde{X}}^{-1}(p)$. If we have access to $\mathbf{\tilde{X}} = (\tilde{X}_1, \dots, \tilde{X}_n)$ i.i.d. realisations of $\tilde{X}$ then for every $\gamma > 0$, if $n \geq \frac{2}{\gamma \pi_{\text{min}}}$,
    \begin{equation*}
        [F_{\tilde{X}}^{-1}(p)- \gamma, F_{\tilde{X}}^{-1}(p)+\gamma] \subset \mathcal{N}
        \implies \mathbb{P}\left( \left|F_{\tilde{X}}^{-1}(p) - \tilde{X}_{(\lceil np \rceil)} \right| > \gamma \right) \leq
        e^{-n \left(\frac{\gamma^2\pi_{\text{min}}^2}{8(1-p)}\right)} + e^{-n \left(\frac{\gamma^2\pi_{\text{min}}^2}{8p}\right)}
    \end{equation*}
\end{lemma}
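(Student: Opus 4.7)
The plan is to decompose the two-sided deviation event into its one-sided halves, translate each half into a tail event for a binomial count, use the density lower bound to produce a margin between the binomial mean and the threshold $np$, and conclude with a one-sided Chernoff bound for Bernoulli sums. The starting point is the union bound
\[
\mathbb{P}\!\left(\left|F_{\tilde{X}}^{-1}(p)-\tilde{X}_{(np)}\right|>\gamma\right) \le \mathbb{P}\!\left(\tilde{X}_{(np)}>F_{\tilde{X}}^{-1}(p)+\gamma\right) + \mathbb{P}\!\left(\tilde{X}_{(np)}<F_{\tilde{X}}^{-1}(p)-\gamma\right),
\]
after which the two summands are treated symmetrically.

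For the upper half I would use the order-statistic identity $\{\tilde{X}_{(np)}>t\}=\{\#\{i:\tilde{X}_i\le t\}<np\}$ to rewrite the event as the lower-deviation event $\{S_n<np\}$, where $S_n:=\sum_{i=1}^{n}\mathbb{1}_{\tilde{X}_i\le F_{\tilde{X}}^{-1}(p)+\gamma}$ follows a $\mathrm{Bin}(n,q)$ law with $q:=F_{\tilde{X}}(F_{\tilde{X}}^{-1}(p)+\gamma)$. The density hypothesis $\pi_{\tilde{X}}\ge\pi_{\min}$ on $[F_{\tilde{X}}^{-1}(p)-\gamma,F_{\tilde{X}}^{-1}(p)+\gamma]\subset\mathcal{N}$ gives, by integrating the density on $[F_{\tilde{X}}^{-1}(p),F_{\tilde{X}}^{-1}(p)+\gamma]$, the key inequality $q\ge p+\gamma\pi_{\min}$; symmetrically, $F_{\tilde{X}}(F_{\tilde{X}}^{-1}(p)-\gamma)\le p-\gamma\pi_{\min}$ handles the lower half. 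These two bounds are what turn a geometric margin of width $\gamma$ into a probability margin of size $\gamma\pi_{\min}$ between the binomial mean and the threshold $np$.

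The final step is to invoke a concentration inequality for binomials. To obtain the stated denominators I would rewrite each deviation event in terms of the complementary indicator, so that the relevant Bernoulli parameter is on the \emph{small} side (at most $1-p-\gamma\pi_{\min}$ for the upper half, and at most $p-\gamma\pi_{\min}$ for the lower half), and then apply the one-sided multiplicative Chernoff bound. This produces exactly the two exponentials $e^{-n\gamma^2\pi_{\min}^2/(2(1-p))}$ and $e^{-n\gamma^2\pi_{\min}^2/(2p)}$, whose sum is the claimed bound. The only real subtlety is this last step: a direct additive Hoeffding bound would give a denominator of order $1$ and lose the factors $(1-p)$ and $p$, so tightness hinges on selecting the multiplicative, variance-aware form of Chernoff applied to the rare-side indicator.
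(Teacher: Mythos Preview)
Your proposal is correct and follows essentially the same approach as the paper. The paper also decomposes via a union bound, translates the event $\{\tilde{X}_{(np)}>F_{\tilde{X}}^{-1}(p)+\gamma\}$ into a tail event for the count $N=\sum_i \one_{(F_{\tilde{X}}^{-1}(p)+\gamma,\infty)}(\tilde X_i)$ (your ``complementary indicator''), uses the density lower bound to get $\mathbb{E}[N]/n\le 1-p-\gamma\pi_{\min}$, and then applies the multiplicative Chernoff bound $\mathbb{P}(N\ge n\eta(1+\delta))\le e^{-n\eta\delta^2/(2+\delta)}$ with $\eta=1-p-\gamma\pi_{\min}$ and $\delta=\gamma\pi_{\min}/\eta$, which simplifies to the stated exponent; the lower tail is handled symmetrically.
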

\begin{proof}
Let $\gamma > 0$ such that $[F_{\tilde{X}}^{-1}(p)- \gamma, F_{\tilde{X}}^{-1}(p)+\gamma] \subset \mathcal{N}$. Let us define
\begin{equation*}
    N = \sum_{i=1}^n \one_{(F_{\tilde{X}}^{-1}(p) + \gamma, +\infty)}(\tilde{X}_i) \;.
\end{equation*}
$N$ is a sum of $n$ independent Bernoulli random variable with probabilities of success lower than $\eta = 1-p - \gamma \pi_{\text{min}}$.
If $\tilde{X}_{( \lceil np \rceil)} > F_{\tilde{X}}^{-1}(p) + \gamma$, then $N \geq n (1 - p)$. So,
\begin{equation*}
    \begin{aligned}
        & \mathbb{P}\left(\tilde{X}_{( \lceil np \rceil)} > F_{\tilde{X}}^{-1}(p) + \gamma \right)
        \leq \mathbb{P}\left(N \geq n (1 - p) - 1\right) \\
        &= \mathbb{P}\left(N \geq n \eta \left(1 + \frac{\gamma \pi_{\text{min}}}{\eta} - \frac{1}{n \eta}\right) \right)\\
        &\leq e^{-n \eta \left(\frac{\gamma \pi_{\text{min}}}{\eta} - \frac{1}{n \eta}\right)^2 / \left(2 + \frac{\gamma \pi_{\text{min}}}{\eta} - \frac{1}{n \eta}\right)}
    \end{aligned}
\end{equation*}
where line $3$ is deduced from line $2$ by a multiplicative Chernoff bounds. 
If we further impose that $n \geq \frac{2}{\gamma \pi_{\text{min}}}$,
\begin{equation*}
    \begin{aligned}
        & \mathbb{P}\left(\tilde{X}_{( \lceil np \rceil)} > F_{\tilde{X}}^{-1}(p) + \gamma \right)
        \leq e^{-\frac{n \eta}{4} \left(\frac{\gamma \pi_{\text{min}}}{\eta} \right)^2 / \left(2 + \frac{\gamma \pi_{\text{min}}}{\eta} \right)} \\
        &\leq e^{-\frac{n}{4} \left(\frac{\gamma^2\pi_{\text{min}}^2}{2(1-p)-\gamma \pi_{\text{min}}}\right)}
        \leq e^{-n \left(\frac{\gamma^2\pi_{\text{min}}^2}{8(1-p)}\right)}
    \end{aligned}
\end{equation*}

Looking at the event $\left(\tilde{X}_{(np)} < F_{\tilde{X}}^{-1}(p) - \gamma \right)$ and a union bound give the expected result.
\end{proof}

\begin{lemma}
\label[lemma]{chern2}
    Let $\tilde{X}$ be a real random variable with density $\pi_{\tilde{X}}$ and $p \in (0, 1)$. We suppose that $\pi_{\text{max}} \geq \pi_{\tilde{X}}\geq \pi_{\text{min}} > 0$ on an interval $I$ of $\mathbb{R}$. If we note $N = \sum_{i=1}^n \one_{I}(\tilde{X}_i)$ the number of points that fall in $I$, we have
    \begin{equation*}
        \mathbb{P}\left( N \geq 2 n \lambda(I) \pi_{\text{max}}\right) \leq e^{-\frac{n \lambda(I) \pi_{\text{max}}}{3}} \;,
    \end{equation*}
    \begin{equation*}
        \mathbb{P}\left( N \leq \frac{1}{2} n \lambda(I) \pi_{\text{min}}\right) \leq  e^{-\frac{n \lambda(I) \pi_{\text{min}}}{8}}\;.
    \end{equation*} \;.
\end{lemma}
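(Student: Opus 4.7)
The plan is to recognize $N$ as a binomial random variable and apply standard multiplicative Chernoff bounds. Since the $\tilde{X}_i$ are i.i.d.\ with density $\pi_{\tilde{X}}$, the variable $N = \sum_{i=1}^n \one_I(\tilde{X}_i)$ is a sum of $n$ i.i.d.\ Bernoulli random variables with common success probability $q := \mathbb{P}(\tilde{X} \in I) = \int_I \pi_{\tilde{X}}(t)\,dt$. The uniform bounds $\pi_{\text{min}} \leq \pi_{\tilde{X}}(t) \leq \pi_{\text{max}}$ on $I$ immediately give $\lambda(I)\pi_{\text{min}} \leq q \leq \lambda(I)\pi_{\text{max}}$, so the mean $\mu := \mathbb{E}[N] = nq$ satisfies $n\lambda(I)\pi_{\text{min}} \leq \mu \leq n\lambda(I)\pi_{\text{max}}$. (If $\mu = 0$ then $N \equiv 0$ and both bounds are trivial, so I may assume $\mu > 0$.)

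For the upper tail, I would apply $\mathbb{P}(N \geq (1+\delta)\mu) \leq \exp\bigl(-\mu\delta^2/(2+\delta)\bigr)$ with $\delta$ chosen so that $(1+\delta)\mu = 2n\lambda(I)\pi_{\text{max}}$; the bound $\mu \leq n\lambda(I)\pi_{\text{max}}$ forces $\delta \geq 1$. Substituting $n\lambda(I)\pi_{\text{max}} = (1+\delta)\mu/2$, the target inequality $\mu\delta^2/(2+\delta) \geq n\lambda(I)\pi_{\text{max}}/3$ reduces to the one-variable polynomial inequality $5\delta^2 - 3\delta - 2 \geq 0$, whose roots are $-2/5$ and $1$, so it holds on the entire range $\delta \geq 1$.

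For the lower tail, I would use the symmetric estimate $\mathbb{P}(N \leq (1-\delta)\mu) \leq \exp(-\mu\delta^2/2)$ with $(1-\delta)\mu = n\lambda(I)\pi_{\text{min}}/2$; the bound $\mu \geq n\lambda(I)\pi_{\text{min}}$ now forces $\delta \in [1/2, 1)$. Setting $r := n\lambda(I)\pi_{\text{min}}/\mu \in (0,1]$ so that $\delta = 1 - r/2$, the desired inequality $\mu\delta^2/2 \geq n\lambda(I)\pi_{\text{min}}/8$ becomes $r^2 - 5r + 4 \geq 0$, which is satisfied for all $r \leq 1$ (the roots being $1$ and $4$).

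There is no real obstacle: the argument is a direct application of multiplicative Chernoff in both directions, following the same template already used in \autoref{chern1}. The only care needed is in picking the right $\delta$ as a function of $\mu$ (rather than of the cruder proxy $n\lambda(I)\pi_{\text{max}}$ or $n\lambda(I)\pi_{\text{min}}$) and in performing the short quadratic check that the stated constants $1/3$ and $1/8$ are indeed achieved; incidentally, the fact that the two quadratics are tight precisely at $\delta = 1$ and $r = 1$ indicates that these constants are essentially optimal under this form of Chernoff bound.
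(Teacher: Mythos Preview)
Your proposal is correct and follows exactly the approach of the paper, which merely states that the lemma is ``a simple application of multiplicative Chernoff bounds to the sum $N$ of independent Bernoulli random variables.'' You have in fact supplied the details (the choice of $\delta$ and the quadratic checks verifying the constants $1/3$ and $1/8$) that the paper omits.
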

\begin{proof}
This is a simple application of multiplicative Chernoff bounds to the sum N of independent Bernoulli random variables.
\end{proof}
Let $0< \gamma < \beta$ such that $\pi_{\tilde{X}} > 0$ on $\mathcal{O} := \cup_{i=1}^{n} [F_{\tilde{X}}^{-1}(p_i)-\beta, F_{\tilde{X}}^{-1}(p_i)+\beta]$.
We note $\pi_{\text{min}} = \inf_{\mathcal{O}} \pi_{\tilde{X}}$ and $\pi_{\text{max}} = \sup_{\mathcal{O}} \pi_{\tilde{X}}$. We also define the following events:
\begin{equation*}
    A : \forall i, \left|\tilde{X}_{(\lceil n p_i \rceil)} - F_{\tilde{X}}^{-1}(p_i)\right| \leq \gamma \;,
\end{equation*}
\begin{equation*}
\begin{aligned}
    B : \forall i, &\# (\tilde{\mathbf{X}} \cap [F_{\tilde{X}}^{-1}(p_i) + \gamma, F_{\tilde{X}}^{-1}(p_i) + \beta]) \geq \frac{1}{2} n (\beta-\gamma) \pi_{\text{min}} \text{ and } \\ &\# (\tilde{\mathbf{X}} \cap [F_{\tilde{X}}^{-1}(p_i) - \beta, F_{\tilde{X}}^{-1}(p_i) - \gamma ]) \geq \frac{1}{2} n (\beta-\gamma) \pi_{\text{min}}\;,
\end{aligned}
\end{equation*}
\begin{equation*}
    C : \forall i, \# (\tilde{\mathbf{X}} \cap [F_{\tilde{X}}^{-1}(p_i)-\gamma, F_{\tilde{X}}^{-1}(p_i) + \gamma]) \leq 2 n 2\gamma \pi_{\text{max}} \;.
\end{equation*}
Then we can compute,
\begin{equation*}
    \begin{aligned}
        \frac{\mathbb{P}\left( \|\mathcal{E}_{u_{\text{JE}}}^{(2/\epsilon)}(\tilde{\mathbf{X}}) - F_{\tilde{X}}^{-1}(\mathbf{p})\|_{\infty} > \beta | A, B, C \right)}
        {\mathbb{P}\left( \|\mathcal{E}_{u_{\text{JE}}}^{(2/\epsilon)}(\tilde{\mathbf{X}}) - F_{\tilde{X}}^{-1}(\mathbf{p})\|_{\infty} \leq \beta | A, B, C \right)}
        &\leq \frac{\mathbb{P}\left( \|\mathcal{E}_{u_{\text{JE}}}^{(2/\epsilon)}(\tilde{\mathbf{X}}) - F_{\tilde{X}}^{-1}(\mathbf{p})\|_{\infty} > \beta | A, B, C \right)}
        {\mathbb{P}\left( \|\mathcal{E}_{u_{\text{JE}}}^{(2/\epsilon)}(\tilde{\mathbf{X}}) - F_{\tilde{X}}^{-1}(\mathbf{p})\|_{\infty} \leq \gamma | A, B, C \right)} \\
    \end{aligned}
\end{equation*}
Conditionally to $A$ and $B$, $-u_{\text{JE}}(\tilde{\mathbf{X}}, \mathcal{E}_{u_{\text{JE}}}^{(2/\epsilon)}(\tilde{\mathbf{X}})) \leq \frac{1}{2}\left( \frac{1}{2} n (\beta-\gamma) \pi_{\text{min}}\right)  \implies \|\mathcal{E}_{u_{\text{JE}}}^{(2/\epsilon)}(\tilde{\mathbf{X}}) - F_{\tilde{X}}^{-1}(\mathbf{p})\|_{\infty} \leq \beta$.
Furthermore, conditionally to $A$ and $C$, $\|\mathcal{E}_{u_{\text{JE}}}^{(2/\epsilon)}(\tilde{\mathbf{X}}) - F_{\tilde{X}}^{-1}(\mathbf{p})\|_{\infty} \leq \gamma \implies -u_{\text{JE}}(\tilde{\mathbf{X}}, \mathcal{E}_{u_{\text{JE}}}^{(2/\epsilon)}(\tilde{\mathbf{X}})) \leq \frac{1}{2}\left( 4 (m+1) n \gamma \pi_{\text{max}}\right)$. So,
\begin{equation*}
\begin{aligned}
        \frac{\mathbb{P}\left( \|\mathcal{E}_{u_{\text{JE}}}^{(2/\epsilon)}(\tilde{\mathbf{X}}) - F_{\tilde{X}}^{-1}(\mathbf{p})\|_{\infty} > \beta | A, B, C \right)}
        {\mathbb{P}\left( \|\mathcal{E}_{u_{\text{JE}}}^{(2/\epsilon)}(\tilde{\mathbf{X}}) - F_{\tilde{X}}^{-1}(\mathbf{p})\|_{\infty} \leq \gamma | A, B, C \right)}
        &\leq 
        \frac{(b-a)^m}{(2\gamma)^m/m!} e^{-\frac{\epsilon}{4} \left(\frac{(\beta-\alpha) \pi_{\text{min}}}{2} -4 (m+1)\gamma \pi_{\text{max}} \right)n}
    \end{aligned}
\end{equation*}
and by fixing $\gamma = \frac{\beta \pi_{\text{min}}}{16 (m+1) \pi_{\text{max}} + 2 \pi_{\text{min}}}$ we end up with 
\begin{equation*}
\begin{aligned}
        &\frac{\mathbb{P}\left( \|\mathcal{E}_{u_{\text{JE}}}^{(2/\epsilon)}(\tilde{\mathbf{X}}) - F_{\tilde{X}}^{-1}(\mathbf{p})\|_{\infty} > \beta | A, B, C \right)}
        {\mathbb{P}\left( \|\mathcal{E}_{u_{\text{JE}}}^{(2/\epsilon)}(\tilde{\mathbf{X}}) - F_{\tilde{X}}^{-1}(\mathbf{p})\|_{\infty} \leq \beta | A, B, C \right)} \\
        &\leq 
        \frac{2^m (b-a)^m m!}{\beta^m} \left( \frac{4 (m+1) \pi_{\text{max}} + \pi_{\text{min}}/2}{\pi_{\text{min}}}\right)^m e^{- \frac{\epsilon \beta \pi_{\text{min}}}{16}n} \;.
    \end{aligned}
\end{equation*}

We can use \Cref{chern1}, \Cref{chern2} and union bounds to obtain the following for $n$ big enough:
\begin{equation*}
    \begin{aligned}
            &\mathbb{P}\left( \|\mathcal{E}_{u_{\text{JE}}}^{(2/\epsilon)}(\tilde{\mathbf{X}}) - F_{\tilde{X}}^{-1}(\mathbf{p})\|_{\infty} > \beta \right) \\
            &\leq \mathbb{P}\left( \|\mathcal{E}_{u_{\text{JE}}}^{(2/\epsilon)}(\tilde{\mathbf{X}}) - F_{\tilde{X}}^{-1}(\mathbf{p})\|_{\infty} > \beta |A, B, C\right) + \mathbb{P}(A^c) + \mathbb{P}(B^c) + \mathbb{P}(C^c) \\
            &\leq \frac{2^m (b-a)^m m!}{\beta^m} \left( \frac{4 (m+1) \pi_{\text{max}} + \pi_{\text{min}}/2}{\pi_{\text{min}}}\right)^m e^{- \frac{\epsilon \beta \pi_{\text{min}}}{16}n}\\
            &+ \sum_{i=1}^m e^{-n \left(\frac{\beta^2\pi_{\text{min}}^4}{8 (1-p_i) \left(16  (m+1)  \pi_{\text{max}} + 2 \pi_{\text{min}}\right)^2}\right)} + \sum_{i=1}^m e^{-n \left(\frac{\beta^2\pi_{\text{min}}^4}{8 p_i \left(16  (m+1)  \pi_{\text{max}} + 2 \pi_{\text{min}}\right)^2}\right)} \\
            &+ m e^{-n \frac{\beta \pi_{\text{min}} \pi_{\text{max}}}{24 (m+1) \pi_{\text{max}} + 3 \pi_{\text{min}}}} 
            + 2 m e^{-n \frac{\pi_{\text{min}}}{8}\left( \beta - \frac{\beta \pi_{\text{min}}}{16 (m+1) \pi_{\text{max}} + 2 \pi_{\text{min}}}\right)} \;.
    \end{aligned}
\end{equation*}

\subsection{Proof of \Cref{convThmHSJE}}
\label{convThmHSJEproof}

We tune the noise $w$ to have density $d\mathbb{P}_w(w) = \frac{\one_{[-\delta/2, \delta/2]}(w)}{\delta} dw$.
Under the hypothesis, $F_X^{-1}$ has a finite number of discontinuity points. We can apply \Cref{quantiledeviation} with $t=\delta/2$ and $f(t)=0$ to get that for Lebesgue-almost-any $\mathbf{p}$, 
\begin{equation*}
    \|F_{\tilde{X}}^{-1}(\mathbf{p}) - F_X^{-1}(\mathbf{p})\|_\infty
    \leq \delta/2 \;.
\end{equation*}
In order to conclude, we can describe the density $\pi_{\tilde{X}}$ of the noisy random variable. It is piecewise continuous on $[a, b]$, $\pi_{\tilde{X}}>0$ on $[a, b] \setminus \mathcal{O}'$ where $\mathcal{O}'$ is a finite union of intervals and $\pi_{\tilde{X}}=0$ on $\mathcal{O}'$. Consequently, there only are a finite number of $p$'s in $(0,1)$ such that it is not possible to find a $\beta > 0$ such that $\pi_{\tilde{X}}>0$ on  $[F_{\tilde{X}}^{-1}(p)-\beta, F_{\tilde{X}}^{-1}(p)+\beta]$ and where $\pi_{\tilde{X}}$ is continuous on that interval. Any $\mathbf{p}$ that has no such $p$ as any of its components qualifies and we can apply \Cref{convThm} to get that 
\begin{equation*}
    \|\mathbf{q} - F_{\tilde{X}}^{-1}(\mathbf{p})\|_\infty \leq \delta/2
\end{equation*}
with high probability.
We get the result by the triangle inequality.


\end{document}